%% file: main.tex
\documentclass[]{academic_template}

\usepackage{microtype}
\usepackage{graphicx}
\usepackage{wrapfig}
\usepackage{needspace}
\usepackage{subcaption}
\usepackage{booktabs}
\usepackage{hyperref}
\usepackage{amsfonts}
\usepackage{amssymb}
\usepackage{amsmath}
\usepackage{amsthm}
\usepackage{mathtools}
\usepackage{multirow}
\usepackage{makecell}
\usepackage{tabularx}
\usepackage{algorithm}
\usepackage{algorithmic}
\usepackage{chngcntr}
\usepackage{float}


\providecommand{\eg}{e.g.}

\newtheorem{theorem}{Theorem}

\theoremstyle{remark}

\title{Enhancing Pretrained Model-based Continual Representation Learning via Guided Random Projection}

\author[1,2,*]{Ruilin Li}
\author[4,*]{Heming Zou}
\author[6]{Xiufeng Yan}
\author[5]{Zheming Liang}
\author[1,2]{Jie Yang}
\author[1,\dagger]{Chenliang Li}
\author[3,\dagger]{\protect\newline Xue Yang}

\affiliation[1]{Wuhan University}
\affiliation[2]{Shanghai Innovation Institute}
\affiliation[3]{Shanghai Jiao Tong University}
\affiliation[4]{Tsinghua University}

\affiliation[5]{University of Science and Technology of China}
\affiliation[6]{China University of Mining and Technology}

\contribution[*]{Equal contribution}
\contribution[\dagger]{Corresponding author}

\abstract{Recent paradigms in Random Projection Layer (RPL)--based continual representation learning have demonstrated superior performance when building upon a pre-trained model (PTM). These methods insert a randomly initialized RPL after a PTM to enhance feature representation in the initial stage. Subsequently, a linear classification head is used for analytic updates in the continual learning stage. However, under severe domain gaps between pre-trained representations and target domains, a randomly initialized RPL exhibits limited expressivity under large domain shifts. While largely scaling up the RPL dimension can improve expressivity, it also induces an ill-conditioned feature matrix, thereby destabilizing the recursive analytic updates of the linear head. To this end, we propose the Stochastic Continual Learner with MemoryGuard Supervisory Mechanism (SCL-MGSM). Unlike random initialization, MGSM constructs the projection layer via a principled, data-guided mechanism that progressively selects target-aligned random bases to adapt the PTM representation to downstream tasks. This facilitates the construction of a compact yet expressive RPL while improving the numerical stability of analytic updates. Extensive experiments on multiple exemplar-free Class Incremental Learning (CIL) benchmarks demonstrate that SCL-MGSM achieves superior performance compared to state-of-the-art methods. 

\textbf{Project Page:} \url{https://rlinl.github.io/SCL/}}

\begin{document}
\maketitle
\input{main_body}
\end{document}

%% file: main_body.tex
\section{Introduction}
Although pre-trained models (PTMs) constitute strong representational backbones, their continual adaptation to downstream tasks is severely impeded by catastrophic forgetting~\cite{mccloskey1989catastrophic,french1999catastrophic,kirkpatrick2017overcoming, li2025etcon}.
This challenge is particularly pronounced in exemplar-free CIL, which operates without task IDs or historical samples~\cite{zhu2021prototype}.
Recently, Random Projection Layer (RPL)-based methods~\cite{zhuang2022acil, li2024harnessing, mcdonnell2024ranpac, li2023crnet, momenianacp, peng2024loranpac, zou2025fly, zou2025structural, zou2025flylora} have emerged as a promising paradigm, particularly for exemplar-free settings.
Typically, the PTM is adapted on an initial task and then frozen. For each new task, data are projected through the frozen PTM and RPL, and only the linear head is updated via recursive least squares and its variants, which are algebraically equivalent to joint ridge regression on all observed data~\cite{greville1960some}.

The efficacy of this paradigm is theoretically grounded in the geometry of high-dimensional spaces.
As formalized by~\citep{peng2023ideal}, scaling up the RPL dimension expands the available null space, thereby providing the necessary geometric degrees of freedom to identify the intersection of task-specific solution spaces and prevent forgetting.
Driven by this ``wider-is-better'' theoretical insight, existing methods often resort to inflating the RPL to extremely high dimensions (e.g., $>$10k random bases) to ensure sufficient separability and mitigate the domain gap~\cite{peng2024loranpac}.

\begin{wrapfigure}{r}{0.58\textwidth}
    \vspace{-0.1em}
    \centering
    \includegraphics[width=\linewidth]{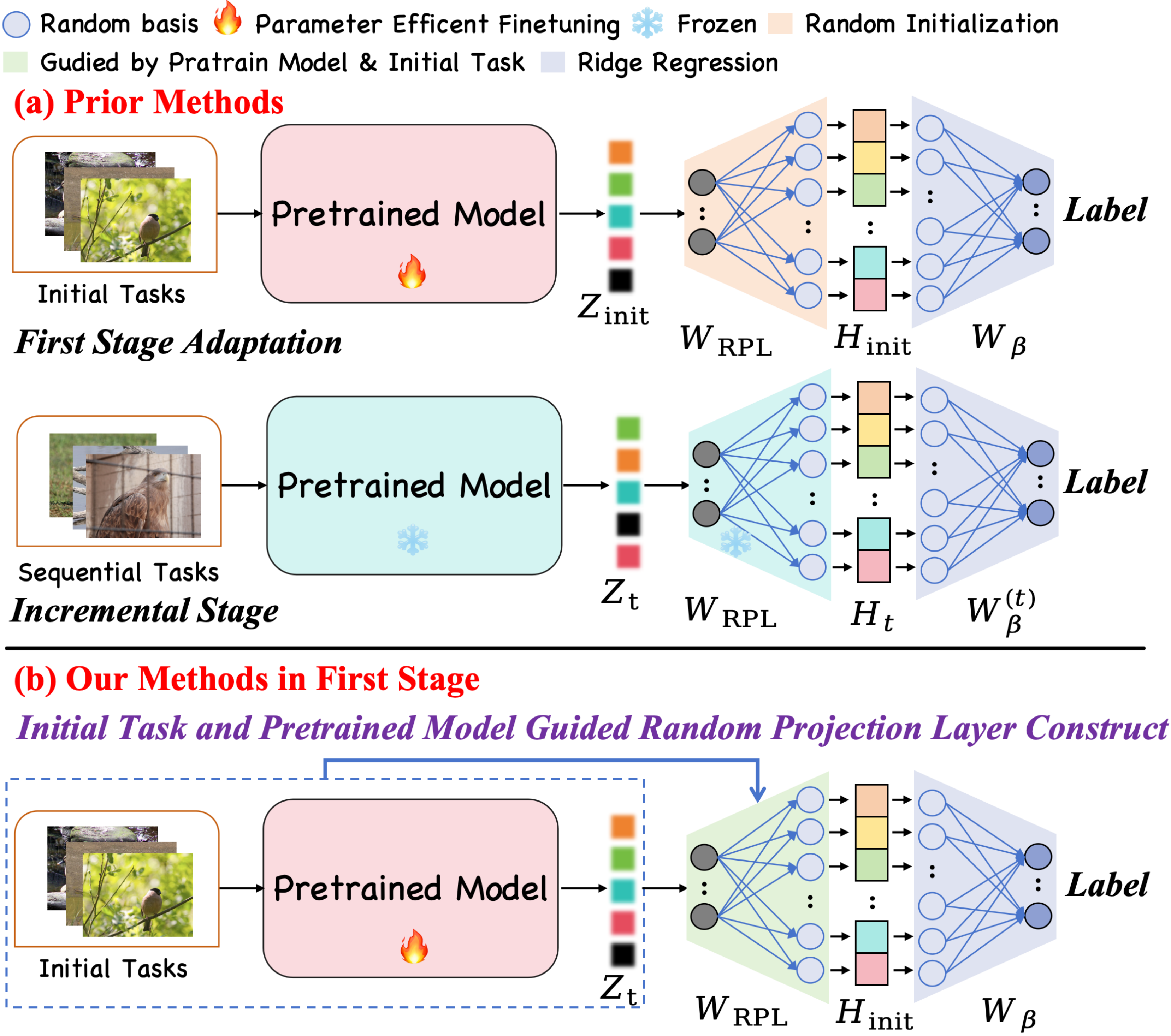}
    \caption{\textbf{Overview of the prior RPL-based CIL paradigm and comparison of initial-stage RPL construction.} \textbf{(a) Prior Methods:} After first stage adaptation, frozen PTM extracts features $\boldsymbol{Z}_{\text{init}}$, which are projected through a randomly initialized RPL ($\boldsymbol{W}_{\text{RPL}}$) to obtain high-dimensional features $\boldsymbol{H}_{\text{init}}$, followed by computing classifier weights $\boldsymbol{W}_{\beta}$. During incremental learning, new features $\boldsymbol{Z}_t$ pass through the same frozen $\boldsymbol{W}_{\text{RPL}}$ to $\boldsymbol{H}_t$, and only $\boldsymbol{W}_{\beta}$ is updated to $\boldsymbol{W}_{\beta}^{(t)}$ via recursive ridge regression. \textbf{(b) Our Method:} We leverage the initial task and PTM to inform MGSM-guided RPL construction. During incremental learning, $\boldsymbol{W}_{\beta}$ is updated recursively as in (a).}
    \label{fig:top}
    
\end{wrapfigure}
However, a critical gap exists between this theoretical ideal and its practical realization. Under a severe domain gap between the pre-trained representations and target domains, unguided random bases are unlikely to cover task-relevant regions in the feature space, causing the RPL to lack the expressivity required for downstream tasks. 
While aggressively scaling up the RPL dimension theoretically mitigates this issue by improving linear separability and expanding the null space to accommodate continual learning, practically it yields a highly ill-conditioned random-feature matrix~\cite{peng2023ideal}. This ill-conditioning forces the ridge-regression solver to rely on additional mechanisms to maintain numerical stability~\cite{mcdonnell2024ranpac,panos2023first}.
Stabilizing analytic updates for a classification head with extremely large dimensions incurs substantial computational overhead.
We posit that such stability and expressivity should arise from the intrinsic quality of the RPL.
Currently, a principled way to configure the RPL that maintains high expressivity in a low dimension while preserving the numerical stability required by analytic classifiers is still lacking.

To address this dilemma, we draw inspiration from first-session adaptation (FSA)~\cite{panos2023first}, where the PTM is fine-tuned on the initial task via Parameter-Efficient Fine-Tuning (PEFT)~\cite{chen2022adaptformer,lian2022scaling,jia2022visual} and then frozen for subsequent tasks to narrow the domain gap. FSA substantially improves performance and has been widely adopted in both RPL-based~\cite{panos2023first,mcdonnell2024ranpac,momenianacp} and prototype-based methods~\cite{zhou2024expandable,zhou2024revisiting}.
Motivated by this insight, we go beyond adapting only the PTM and use the initial task to also guide the construction of the RPL.
\textbf{We therefore propose the Stochastic Continual Learner with MemoryGuard Supervisory Mechanism (SCL-MGSM).}
\textbf{In the initial stage}, MGSM constructs the RPL via a principled, data-guided mechanism: candidate random bases are sampled from an adaptively updated distribution and progressively selected by a target-aligned residual criterion. This produces a compact yet expressive RPL whose dimension is adaptively determined rather than fixed a priori. The convergence of this construction process is theoretically guaranteed. \textbf{In continual learning stages}, this RPL remains frozen. The compact RPL, with less collinear bases, yields better-conditioned feature matrices, improving the numerical stability required by recursive ridge-regression updates.
An overview is provided in Figure \ref{fig:top}.
The core contributions of this work are as follows:

\noindent\textbf{1.}
We propose to go beyond adapting only the PTM and leverage the initial task to also guide the construction of the RPL. This enables an expressive RPL, without resorting to extremely high-dimensional projections.

\noindent\textbf{2.}
We introduce the MemoryGuard Supervisory Mechanism (MGSM), a principled, data-guided mechanism that employs a target-aligned residual criterion to progressively select informative and non-redundant random bases. With theoretical convergence analysis (Theorem~\ref{sm}), MGSM constructs a compact, well-conditioned random feature space that supports stable recursive ridge-regression updates without external compensations.

\noindent\textbf{3.} Extensive experiments on multiple exemplar-free CIL benchmarks demonstrate that SCL-MGSM achieves superior performance and efficiency.

\section{Related Works}

\noindent\textbf{Continual representation learning with Pre-trained Models.}
Adapting PTMs to learn new classes sequentially without forgetting previous ones is a significant challenge.
Recent works primarily follow several strategies:
(1) replay-based methods, which store historical exemplars and interleave them with current-task training to mitigate forgetting~\cite{yan2021dynamically,meng2025diffclass}; 
(2) PEFT-based methods, which freeze the majority of PTM parameters and introduce lightweight trainable modules for task-specific adaptation~\cite{wang2022learning,smith2023coda,qiao2023prompt,sun2024mos,yu2024boosting,chen2025achieving}; and
(3) prototype-based methods, which incrementally maintain class prototypes or subspace statistics for efficient classifier updates~\cite{zhou2024expandable,zhou2024revisiting}. 

\noindent\textbf{Random Projection Layer based Analytic Continual Learning.}
Recent RPL-based analytic continual learning has demonstrated strong performance when building upon PTMs.
ACIL~\citep{zhuang2022acil} and G-ACIL~\cite{zhuang2024g} insert a randomly initialized RPL between the PTM and the classifier to enhance feature representation, and recursively update the classifier via analytic learning.
RanPAC~\cite{mcdonnell2024ranpac} adopts the same RPL strategy and further introduces first-session adaptation (FSA), where the PTM is fine-tuned on the initial task and then frozen for subsequent tasks, to adapt the PTM representations to downstream tasks. This significantly improves performance and is also effective for prototype-based methods.
LoRanPAC~\cite{peng2024loranpac} further scales up the RPL dimension in the first stage to enhance expressivity, and incorporates an additional mechanism to stabilize recursive updates for subsequent tasks, further improving performance.
AnaCP~\cite{momenianacp} introduces a contrastive projection layer to adapt features for each task, also yielding notable improvements.
Our work follows the RPL-based paradigm. Beyond fine-tuning the PTM on the initial task, we use initial-task data to guide RPL construction. This improves RPL expressivity without resorting to extremely high-dimensional projections, yielding better performance while preserving computational efficiency.

\section{Revisiting RPL-based Analytic Continual Learning}
\label{sec:preliminary}

\subsection{Exemplar-free Class-Incremental Learning Setting}
In the CIL setting, the model receives a sequence of datasets $\mathcal{D}_t = \{(\boldsymbol{x}_i, y_i)\}_{i=1}^{N_t}$ for tasks $t = 1, 2, \ldots, T$, where $\boldsymbol{x}_i \in \mathbb{R}^{c \times w \times h}$ denotes an input sample and $y_i \in \mathcal{Y}_t$ is its label. Class sets are disjoint across tasks, i.e., $\mathcal{Y}_t \cap \mathcal{Y}_{\hat{t}} = \varnothing$ for $t \neq \hat{t}$, and task identifiers are unavailable during both training and inference. In the exemplar-free setting, no data from previous stages are stored, so only the current-task dataset $\mathcal{D}_t$ is accessible at stage $t$.

\subsection{The Supervisory Mechanism of SCNs}
We briefly review the Stochastic Configuration Supervisory Mechanism (SCSM) of~\cite{wang2017stochastic}, from which our proposed MGSM departs.
Consider a single-hidden-layer network with $L-1$ hidden units, $f_{L-1}=\boldsymbol{H}_{L-1}\boldsymbol{W}_{\beta_{L-1}}$, where $\boldsymbol{H}_{L-1}=[\boldsymbol{h}_1,\ldots,\boldsymbol{h}_{L-1}]\in\mathbb{R}^{N\times(L-1)}$ is the hidden output matrix with each random basis $\boldsymbol{h}_{i}=g(\boldsymbol{X}\boldsymbol{w}_{i}+b_{i})\in\mathbb{R}^{N}$, and $\boldsymbol{W}_{\beta_{L-1}}\in\mathbb{R}^{(L-1)\times m}$ are the output weights. Here $\boldsymbol{X}\in\mathbb{R}^{N\times d}$ is the input data, $g(\cdot)$ is an activation function, and $\boldsymbol{w}_i$, $b_i$ are randomly sampled input weight and bias. For multi-output residuals $\boldsymbol{e}_{L-1}=[\boldsymbol{e}_{L-1,1},\ldots,\boldsymbol{e}_{L-1,m}]$, a new random basis $\boldsymbol{h}_L$ is accepted only if each output component satisfies:
\begin{equation}
\label{eq:scn_criterion}
    \langle \boldsymbol{e}_{L-1,q}, \boldsymbol{h}_{L} \rangle^{2} \geq\| \boldsymbol{h}_{L} \|^{2} \delta_{L,q},\quad q=1,\ldots,m,
\end{equation}
where $\delta_{L,q}=( 1-r-\mu_{L} ) \| \boldsymbol{e}_{L-1,q} \|^{2}$, with contraction rate $r\in(0,1)$ and vanishing tolerance $0 \le \mu_L \le 1-r$, $\mu_L \to 0$. Under mild regularity conditions, this greedy line-projection criterion guarantees $\lim_{L \to\infty} \| \boldsymbol{e}_{L} \|=0$~\cite{wang2017stochastic}.
In our CIL setting, this criterion often yields compact random features at initialization, but the task-specific selection can bias bases toward current-task residuals and harm cross-stage robustness. To address this limitation, we introduce a novel supervisory mechanism in Section~\ref{sec:method}.

\subsection{Prior RPL-based Continual Representation Learning Methods Framework}
Recent RPL-based methods~\cite{zhuang2022acil,mcdonnell2024ranpac} adopt a three-stage pipeline for exemplar-free CIL.

\textbf{(1) Feature Extraction:} Given the first-stage dataset $\mathcal{D}_{1} = \{(\boldsymbol{x}_i, y_i)\}_{i=1}^{N_1}$, denoted as $\mathcal{D}_{\text{init}}$ (with $N:=N_1$), input samples are fed into the PTM (frozen after first-session adaptation) to obtain feature representations:
\begin{equation}
\label{eq:PTM_prelim}
\boldsymbol{Z}_{\text{init}} = \phi(\boldsymbol{X}_{\text{init}}; \Theta) \in \mathbb{R}^{N \times d},
\end{equation}
where $\boldsymbol{X}_{\text{init}} = [\boldsymbol{x}_1, \ldots, \boldsymbol{x}_N]^\top$, $\phi(\cdot;\Theta)$ denotes the frozen PTM mapping, and $d$ is the embedding dimension.

\textbf{(2) RPL Initialization:} A Random Projection Layer is constructed by randomly sampling a weight matrix $\boldsymbol{W} \in \mathbb{R}^{d \times L}$ and a bias vector $\boldsymbol{b} \in \mathbb{R}^{L}$ from a predefined distribution, yielding the random feature matrix $\boldsymbol{H}_{\mathrm{init}} = g(\boldsymbol{Z}_\text{init}\boldsymbol{W} + \boldsymbol{1}_N\boldsymbol{b}^\top) \in \mathbb{R}^{N \times L}$. Driven by the ``wider-is-better'' theoretical insight~\cite{cover1965geometrical,peng2023ideal}, recent methods often inflate $L \gg N$ to ensure sufficient linear separability and expand the null space for accommodating continual learning.

\textbf{(3) Classifier Analytic Update:} Given initial-task random features $\boldsymbol{H}_{\mathrm{init}}$ and label matrix $\boldsymbol{Y}_{\mathrm{init}}=[\boldsymbol{y}_1,\ldots,\boldsymbol{y}_N]^\top\in\{0,1\}^{N\times C_{\mathrm{init}}}$, where $\boldsymbol{y}_i$ is the one-hot encoding of $y_i$ and $C_{\mathrm{init}}=|\mathcal{Y}_1|$ is the number of initial-task classes, the classifier is first obtained by ridge-regularized least squares:
\begin{equation}
\boldsymbol{W}_{\beta}^{(1)} = \arg\min_{\boldsymbol{W}_{\beta}}
\left\|\boldsymbol{H}_{\mathrm{init}}\boldsymbol{W}_{\beta}-\boldsymbol{Y}_{\mathrm{init}}\right\|_F^2
+ \lambda\left\|\boldsymbol{W}_{\beta}\right\|_F^2.
\end{equation}
The corresponding sufficient statistic is
\begin{equation}
\boldsymbol{P}_{\mathrm{init}} = \boldsymbol{H}_{\mathrm{init}}^{\top}\boldsymbol{H}_{\mathrm{init}} + \lambda \boldsymbol{I}. \label{eq:P_init}
\end{equation}
For subsequent tasks $t = 2, 3, \dots$, recursive least squares (RLS) updates~\cite{greville1960some} are:
\begin{align}
\boldsymbol{P}_{t} &= \boldsymbol{P}_{t-1} + \boldsymbol{H}_t^\top \boldsymbol{H}_t, \label{eq:P_update_simple}\\
\boldsymbol{W}_{\beta}^{(t)} &= \boldsymbol{W}_{\beta}^{(t-1)} + \boldsymbol{P}_t^{-1} \boldsymbol{H}_t^\top
(\boldsymbol{Y}_t - \boldsymbol{H}_t \boldsymbol{W}_{\beta}^{(t-1)}), \label{eq:W_update_simple}
\end{align}
which is algebraically equivalent to joint ridge regression on all observed data, requiring only current-task data and the sufficient statistics $(\boldsymbol{W}_{\beta}^{(t-1)}, \boldsymbol{P}_{t-1})$. Method-specific variants may modify the update form but follow the same recursive update principle.
In practice, this recursion is numerically reliable when $\boldsymbol{P}_{t}$ remains well-conditioned.
If the RPL dimension is very large, unguided random bases can become highly collinear, which may increase the condition number of $\boldsymbol{H}_{k}^{\top}\boldsymbol{H}_{k}$ and degrade the conditioning of $\boldsymbol{P}_{t}$ when regularization is insufficient relative to the spectral spread.
Under finite-precision arithmetic, the inverse in Eq.~\eqref{eq:W_update_simple} may then amplify numerical errors and destabilize incremental learning.
\textbf{This suggests a practical trade-off in RPL-based continual learning: unguidedly increasing the number of random bases can improve representational expressivity, but it can also reduce the numerical robustness of recursive updates.}

\section{Method}
\label{sec:method}

\begin{figure*}[t]
		\begin{center}
			\includegraphics[width=\linewidth]{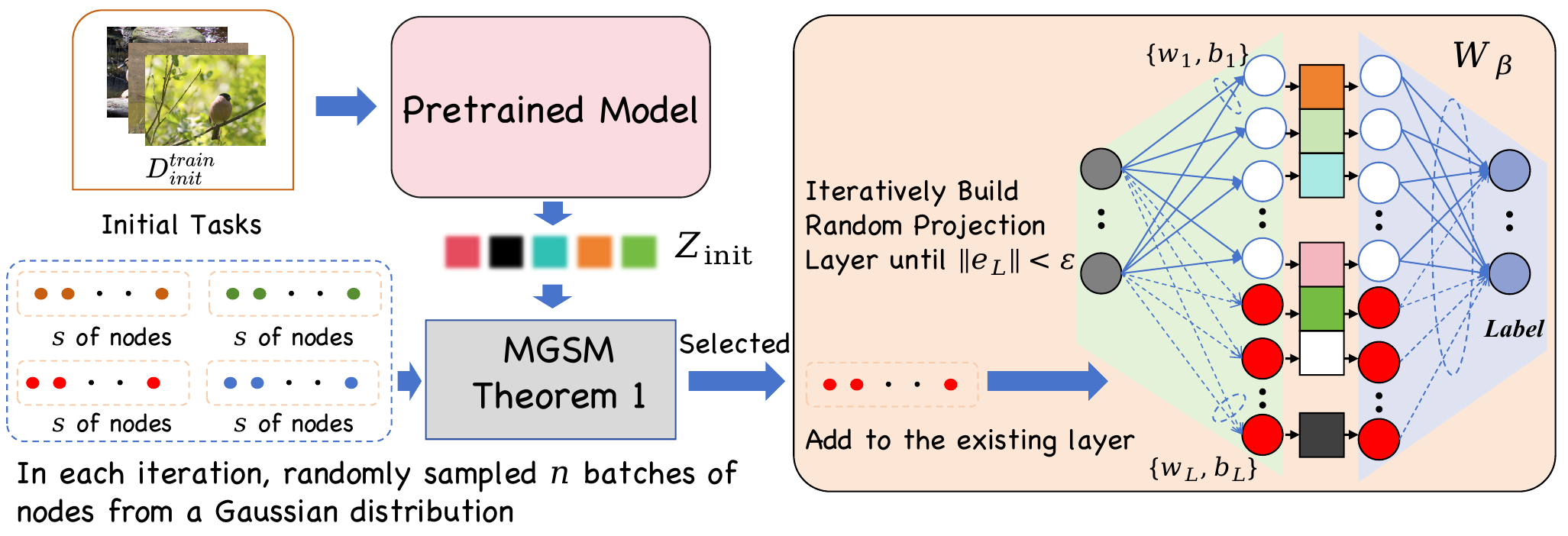}
		\caption{\textbf{Overview of MGSM-guided RPL construction in SCL-MGSM.} Data from any stage can serve as the initialization set to build the RPL from scratch. Random hidden units are progressively sampled, evaluated by MGSM, and appended to the RPL only if they satisfy the supervisory criterion. The construction terminates once the residual converges below a predefined threshold $\varepsilon$. See Appendix~\ref{a_MGSM_process} for details.}
	\label{fig:pipline}
		\end{center}
	
\end{figure*}

We propose SCL-MGSM, whose core is the MemoryGuard Supervisory Mechanism (MGSM), a principled data-guided framework for progressively constructing an expressive yet well-conditioned RPL.
Motivated by first-session adaptation (FSA)~\cite{panos2023first}, MGSM leverages initial-task data and the pretrained model to guide RPL construction, iteratively selecting target-aligned, non-redundant random bases rather than sampling them all at once.
The overall framework is illustrated in Fig.~\ref{fig:pipline}.
In Section~\ref{sec:MGSM}, we detail the algorithmic workflow of MGSM step by step (see Algorithm~\ref{Alg_MGSM} for the complete procedure).
In Section~\ref{sec:rationale}, we explain how MGSM yields an expressive RPL whose well-conditioned feature matrix ensures numerically stable recursive analytic updates.

\subsection{SCL-MGSM Construction}
\label{sec:MGSM}

\noindent \textbf{Random Sampling of Hidden Units.}
Let $\boldsymbol{Z}_t$ denote the stage-$t$ backbone features obtained via Eq.~\eqref{eq:PTM_prelim}, with $\boldsymbol{Z}_{\text{init}}$ as the initialization-stage instance.
Each candidate hidden unit is defined as
\begin{equation}
\boldsymbol{h}_i
= g\!\left(\boldsymbol{Z}_{\mathrm{init}}\boldsymbol{w}_i + b_i\right)
\in \mathbb{R}^{N \times 1},
\label{eq:basic-block}
\end{equation}
where $\boldsymbol{w}_i$ and $b_i$ are the input weight and bias of the $i$-th hidden unit,
both randomly sampled from $\mathcal{N}(0,\xi^2)$ with scaling factor $\xi$.
When a candidate is accepted by MGSM, its activation vector $\boldsymbol{h}_i$
is appended as a column of the RPL matrix.

\noindent \textbf{Incremental Construction of the RPL.} SCL-MGSM incrementally constructs the RPL via an iterative forward configuration.
After $(L-s)$ hidden units have been accepted, their activations on
$\boldsymbol{Z}_{\mathrm{init}}$ form the current RPL output
\begin{equation}
\boldsymbol{H}_{L-s}
= [\boldsymbol{h}_1,\ldots,\boldsymbol{h}_{L-s}]
\in \mathbb{R}^{N \times (L-s)}.
\end{equation}
The corresponding classifier is
\begin{equation}
f_{L-s}(\boldsymbol{Z}_{\mathrm{init}})
= \boldsymbol{H}_{L-s}\boldsymbol{W}_{\beta_{L-s}},
\end{equation}
where $\boldsymbol{W}_{\beta_{L-s}} \in \mathbb{R}^{(L-s)\times C}$ are the readout
weights mapping the RPL output to the $C$ class scores.

To decide whether further expansion is required, we define the current
multi-output residual (with label matrix $\boldsymbol{Y}\in\mathbb{R}^{N\times C}$)
\begin{equation}
    \boldsymbol{E}_{L-s}
    =
    \boldsymbol{Y}
    -
    \boldsymbol{H}_{L-s}\boldsymbol{W}_{\beta_{L-s}},
\end{equation}
and monitor its Frobenius norm $\|\boldsymbol{E}_{L-s}\|_{F}$.
If $\|\boldsymbol{E}_{L-s}\|_{F}\le\varepsilon$ for a predefined tolerance
$\varepsilon>0$, the RPL construction completes.

Otherwise, we continue to expand the RPL: at each iteration, we draw $B_{\max}$ candidate blocks.
For each $j\in\{1,\ldots,B_{\max}\}$, we independently sample $s$ hidden units
with parameters drawn from $\mathcal{N}(0,\xi^2)$ and compute their activations
on $\boldsymbol{Z}_{\mathrm{init}}$ to obtain a candidate matrix $\boldsymbol{H}_{s}^{(j)} \in \mathbb{R}^{N\times s}$.
Using the augmented hidden matrix
$[\boldsymbol{H}_{L-s},\boldsymbol{H}_{s}^{(j)}]$, we evaluate the $j$-th
candidate block by the MGSM acceptance criterion (formalized in
Theorem~\ref{sm}).

Among all candidates that satisfy the criterion, we select the one achieving the
largest improvement and accept it, so that the output matrix is updated to
\begin{equation}
    \boldsymbol{H}_{L}
    =
    [\boldsymbol{H}_{L-s},\boldsymbol{H}_{s}^{(j^\ast)}].
\end{equation}

\noindent\textbf{MGSM Acceptance Criterion.}
We now formalize the block-wise selection rule used above.
Let $\boldsymbol{H}_s \in \mathbb{R}^{N\times s}$ be the output matrix of a batch of $s$ new hidden units, and define the augmented hidden matrix
$\boldsymbol{H}_L = [\boldsymbol{H}_{L-s}, \boldsymbol{H}_s]$. Set
\begin{equation}
\begin{aligned}
    \boldsymbol{S} &= \left( \boldsymbol{H}_{s}^{\top} \boldsymbol{H}_{s} + \lambda \boldsymbol{I}_{s} \right) 
    - \boldsymbol{H}_{s}^{\top} \boldsymbol{H}_{L-s}
    \left( \boldsymbol{H}^{\top}_{L-s} \boldsymbol{H}_{L-s} + \lambda \boldsymbol{I}_{L-s} \right)^{-1}
    \boldsymbol{H}^{\top}_{L-s} \boldsymbol{H}_{s}.
\end{aligned}
\end{equation}
For theoretical conciseness, we next state the criterion for a single-output
target column. Let
$\boldsymbol{e}_{L-s}=\boldsymbol{y}-\boldsymbol{H}_{L-s}\boldsymbol{W}_{\beta_{L-s}}$,
and define
\begin{equation}
\label{eq:th_def1}
    \boldsymbol{v} \;=\; \boldsymbol{H}_s^{\top}\,\boldsymbol{e}_{L-s}.
\end{equation}

\begin{theorem}
  \label{sm}
  Let $\boldsymbol{y}\in\mathbb{R}^{N}$ be the target vector, and let $\boldsymbol{H}_{L-s} \in \mathbb{R}^{N \times (L-s)}$ be the output matrix of the current network $f_{L-s}$. Suppose $
  \boldsymbol{W}_{\beta_{L-s}}  = [\beta_1,\dots,\beta_{L-s}]^{\top}$ is the output weights, and define the current residual:
  \begin{equation}
      \boldsymbol{e}_{L-s} \;=\; \boldsymbol{y} - \boldsymbol{H}_{L-s}\,\boldsymbol{W}_{\beta_{L-s}}.
  \end{equation}

  If the batch of new hidden units with output $\boldsymbol{H}_s$ satisfy the following inequality and $\mathcal{R}_L \geq 0$ (Eq.~\ref{eq:coupling_term}):
  \begin{equation}
  \label{eq:core_sm}
      2 \boldsymbol{v}^{\top} \, \boldsymbol{S}^{-1} \, \boldsymbol{v}-\boldsymbol{v}^{\top} \Bigl( \boldsymbol{S}^{-1} \boldsymbol{H}_{s}^{\top} \boldsymbol{H}_{s} \, \boldsymbol{S}^{-1} \Bigr) \, \boldsymbol{v} \geq( 1-r ) \, \| \boldsymbol{e}_{L-s} \|^{2},
  \end{equation}
  where $0<r<1$ and the output weights $\boldsymbol{W}_{\beta_{L}^{\star }} = [\beta^{\star }_1,\dots,\beta^{\star }_{L}]^{\top}$ are the ridge-regression weights that minimize

\begin{equation}
\label{update_weight}
\boldsymbol{W}_{\beta_{L}}^{\star}
= \operatorname*{arg\,min}_{\boldsymbol{W}\in\mathbb{R}^{L}}
\;\bigl\|\boldsymbol{y}-\boldsymbol{H}_{L}\boldsymbol{W}\bigr\|_{2}^{2}
+ \lambda \bigl\|\boldsymbol{W}\bigr\|_{2}^{2}.
\end{equation}

      then $\lim_{L \to\infty} \|\boldsymbol{e}_{L} \|=0$, where $\boldsymbol{e}_{L} \;=\; \boldsymbol{y} - \boldsymbol{H}_{L}\boldsymbol{W}_{\beta_{L}^{\star }}$.
  \end{theorem}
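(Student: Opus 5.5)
The plan is a one-step contraction argument in the style of the SCN convergence proof of~\cite{wang2017stochastic}, adapted to block additions and ridge readouts. I would show that each accepted block satisfies
\[
\|\boldsymbol{e}_L\|^2 \le r\,\|\boldsymbol{e}_{L-s}\|^2 .
\]
Iterating this gives $\|\boldsymbol{e}_L\|^2 \le r^{k}\|\boldsymbol{e}_{L_0}\|^2$ after $k$ accepted blocks. Since $0<r<1$, this tends to $0$. Moreover $\|\boldsymbol{e}_L\|$ is nonincreasing along the construction, so the whole sequence converges to $0$ and not just a subsequence.

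\textbf{Step 1: an explicit intermediate candidate.} I would not attack the ridge minimizer $\boldsymbol{W}^{\star}_{\beta_L}$ directly. Instead I would keep the old weights on $\boldsymbol{H}_{L-s}$ and fit only the new block against the current residual, using the Schur-complement direction
\[
\tilde{\boldsymbol{W}} = \begin{bmatrix}\boldsymbol{W}_{\beta_{L-s}}\\ \boldsymbol{0}\end{bmatrix} + \begin{bmatrix}-(\boldsymbol{H}_{L-s}^{\top}\boldsymbol{H}_{L-s}+\lambda\boldsymbol{I})^{-1}\boldsymbol{H}_{L-s}^{\top}\boldsymbol{H}_s\\ \boldsymbol{I}_s\end{bmatrix}\boldsymbol{S}^{-1}\boldsymbol{v}.
\]
This is the block-inverse (Banachiewicz) form of the ridge solution restricted to the new coordinates, and it is why $\boldsymbol{S}$ appears in the statement. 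As a first simplification I would expand the residual with just the $\boldsymbol{H}_s\boldsymbol{S}^{-1}\boldsymbol{v}$ term, $\tilde{\boldsymbol{e}}=\boldsymbol{e}_{L-s}-\boldsymbol{H}_s\boldsymbol{S}^{-1}\boldsymbol{v}$. Using $\boldsymbol{v}=\boldsymbol{H}_s^{\top}\boldsymbol{e}_{L-s}$ and the symmetry of $\boldsymbol{S}$, this gives
\[
\|\tilde{\boldsymbol{e}}\|^2=\|\boldsymbol{e}_{L-s}\|^2-2\boldsymbol{v}^{\top}\boldsymbol{S}^{-1}\boldsymbol{v}+\boldsymbol{v}^{\top}\boldsymbol{S}^{-1}\boldsymbol{H}_s^{\top}\boldsymbol{H}_s\boldsymbol{S}^{-1}\boldsymbol{v}.
\]
Inequality~\eqref{eq:core_sm} then reads exactly $\|\tilde{\boldsymbol{e}}\|^2\le r\|\boldsymbol{e}_{L-s}\|^2$. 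A side check here is that $\boldsymbol{S}\succ 0$, so that $\boldsymbol{S}^{-1}$ exists. This holds because $\boldsymbol{S}$ is the Schur complement of the positive definite matrix $\boldsymbol{H}_L^{\top}\boldsymbol{H}_L+\lambda\boldsymbol{I}$.

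\textbf{Step 2: from the candidate to the ridge optimum.} Next I would compare $\boldsymbol{e}_L$, the residual of the true ridge minimizer in~\eqref{update_weight}, with the candidate residual $\tilde{\boldsymbol{e}}$. By optimality of $\boldsymbol{W}^{\star}_{\beta_L}$,
\[
\|\boldsymbol{e}_L\|^2+\lambda\|\boldsymbol{W}^{\star}_{\beta_L}\|^2\le \|\tilde{\boldsymbol{e}}\|^2+\lambda\|\tilde{\boldsymbol{W}}\|^2 .
\]
The gap $\|\tilde{\boldsymbol{e}}\|^2-\|\boldsymbol{e}_L\|^2$ consists of two pieces: the cross term $\boldsymbol{H}_{L-s}(\cdot)$ coupling old and new blocks, and the difference of the penalty terms. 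I would identify this gap with the coupling term $\mathcal{R}_L$ of Eq.~\eqref{eq:coupling_term}. The hypothesis $\mathcal{R}_L\ge 0$ then yields
\[
\|\boldsymbol{e}_L\|^2\le\|\tilde{\boldsymbol{e}}\|^2\le r\|\boldsymbol{e}_{L-s}\|^2 .
\]

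\textbf{Main obstacle.} The hard part is Step 2. Under ridge regularization the minimizer does not minimize the residual norm alone, so monotonicity of $\|\boldsymbol{e}_L\|$ is not automatic. The cross term from re-adjusting the old weights also does not vanish, because $\boldsymbol{H}_{L-s}^{\top}\boldsymbol{e}_{L-s}=\lambda\boldsymbol{W}_{\beta_{L-s}}\neq\boldsymbol{0}$. I would first try to expand $\|\boldsymbol{e}_L\|^2$ exactly through the block-inverse formula. I would then collect every term that is not already part of $\|\tilde{\boldsymbol{e}}\|^2$ and check that they assemble into $\mathcal{R}_L$, which the theorem takes as an assumption. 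If they do not match exactly, I would fall back on the weaker inequality above, with the $\lambda$-penalty difference absorbed into $\mathcal{R}_L$.
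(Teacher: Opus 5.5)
Your plan is essentially the paper's proof: the paper writes $\boldsymbol{W}_{\beta_L}^{\star}$ in Schur-complement block form and obtains $\boldsymbol{e}_L=\boldsymbol{e}_{L-s}-\boldsymbol{H}_s\boldsymbol{S}^{-1}\boldsymbol{v}+\boldsymbol{u}$, where $\boldsymbol{u}=\boldsymbol{H}_{L-s}(\boldsymbol{H}_{L-s}^{\top}\boldsymbol{H}_{L-s}+\lambda\boldsymbol{I})^{-1}\boldsymbol{H}_{L-s}^{\top}\boldsymbol{H}_s\boldsymbol{S}^{-1}\boldsymbol{v}$; it then splits $\|\boldsymbol{e}_{L-s}\|^2-\|\boldsymbol{e}_L\|^2$ into the left side of \eqref{eq:core_sm} plus $\mathcal{R}_L$ and iterates the contraction $\|\boldsymbol{e}_L\|^2\le r\|\boldsymbol{e}_{L-s}\|^2$. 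Your displayed $\tilde{\boldsymbol{W}}$ already \emph{is} $\boldsymbol{W}_{\beta_L}^{\star}$ (since $\boldsymbol{W}_{\beta_{L-s}}$ is the previous ridge solution), so $\boldsymbol{e}_L=\tilde{\boldsymbol{e}}+\boldsymbol{u}$ and $\|\tilde{\boldsymbol{e}}\|^2-\|\boldsymbol{e}_L\|^2=\mathcal{R}_L$ holds exactly, with no penalty-difference piece; your first-choice exact expansion therefore succeeds, and you should drop the optimality-inequality fallback, which only bounds the gap below by a $\lambda$-weighted difference of squared weight norms of no fixed sign.
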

\begin{proof}
See Appendix \ref{proof of MGSM}.
\end{proof}

\noindent\textit{Remark (multi-output extension).}
For classification with $\boldsymbol{Y} \in \mathbb{R}^{N\times C}$,
Theorem~\ref{sm} is applied independently to each output column, which is
equivalent to the Frobenius-norm formulation used in our construction.

If no candidate block is accepted under the current scaling $\xi$, we switch to
the next value $\xi \in\mathcal{X}_{\xi}=\{\xi_{\min},\xi_{\min}+\Delta\xi,\ldots,\xi_{\max}\}$,
resample $B_{\max}$ candidate blocks from $\mathcal{N}(0,{\xi}^{2})$, and
re-evaluate them by the same criterion.
Empirically, exploring this discrete scaling set reliably yields admissible
blocks, broadens the effective function space, and improves robustness to
diverse task distributions, while keeping the PTM frozen and relying only on
forward computations.

\noindent \textbf{Initial SCL-MGSM Classifier.} Let $L^\ast$ denote the final number of hidden units in the constructed RPL.
The SCL-MGSM classifier at the initial stage is
\begin{equation}
\label{eq:head}
f_{L^*}(\boldsymbol{Z}_{\mathrm{init}}) = \boldsymbol{H}_{L^*} \boldsymbol{W}_{\beta} = \sum_{i=1}^{L^*} \boldsymbol{h}_i \boldsymbol{\beta}_i^{\top} \in \mathbb{R}^{N \times C},
\end{equation}
where $\boldsymbol{W}_\beta$ is obtained by ridge-regularized least squares.
This procedure yields a task-relevant RPL characterized by the output matrix
$\boldsymbol{H}_{L^\ast}$.

\noindent\textbf{Incremental Update of Output Weights.}
The recursive update rule follows the generic RPL-based CIL formulation in Section~\ref{sec:preliminary}, i.e., Eqs.~\eqref{eq:P_update_simple}--\eqref{eq:W_update_simple}. In our method, $\boldsymbol{H}_{\mathrm{init}}=\boldsymbol{H}_{L^\ast}$, and thus Eq.~\eqref{eq:P_init} initializes the sufficient statistic for subsequent exemplar-free updates.

\subsection{Underlying Rationale}
\label{sec:rationale}
\begin{figure}[t]
    \centering
    \begin{minipage}[t]{0.31\linewidth}
        \centering
        \includegraphics[height=2.5cm,keepaspectratio]{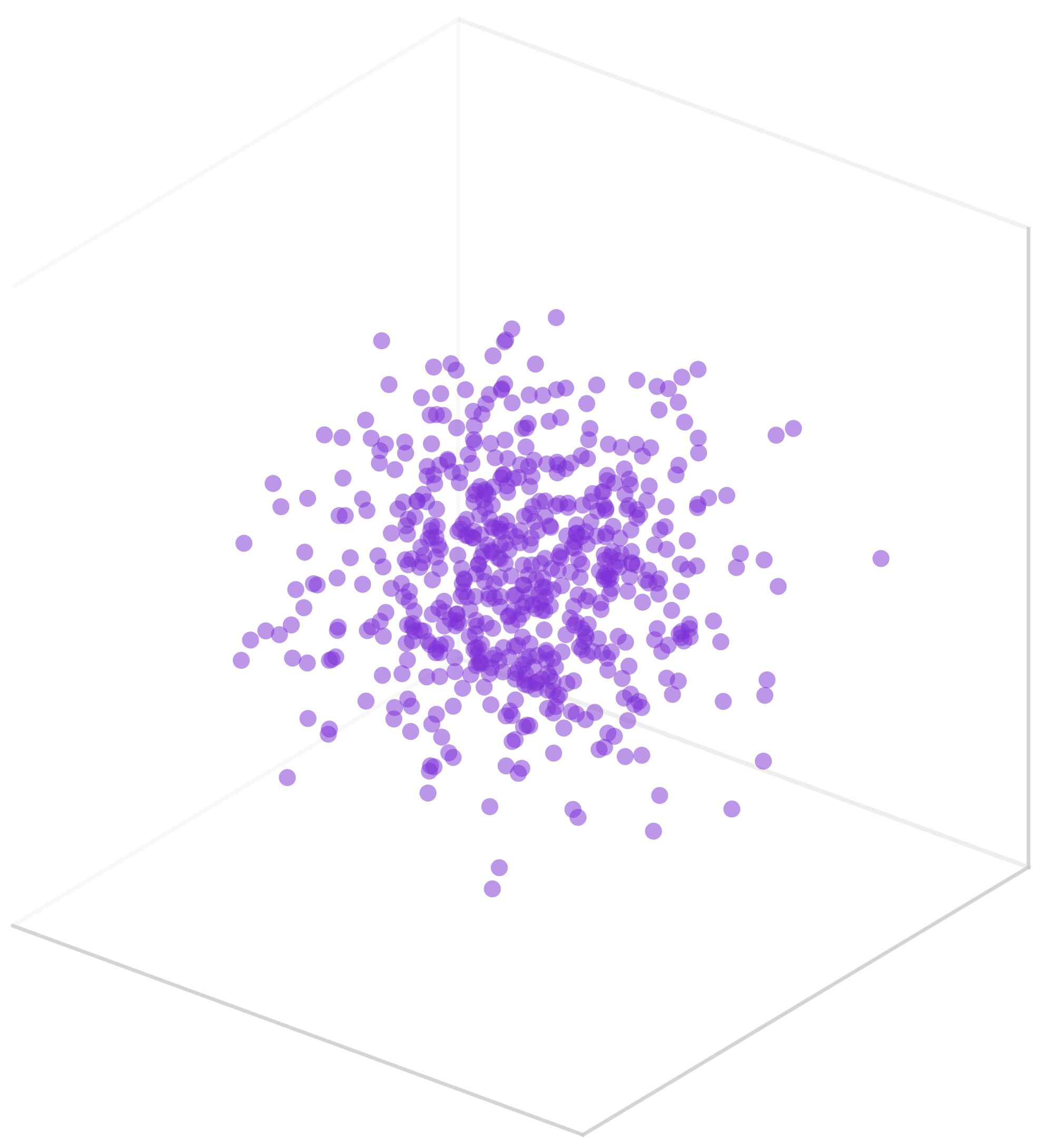}
        \vspace{\smallskipamount}
        \begin{minipage}[t][2cm][t]{\linewidth}
            \captionof{figure}{\textbf{Gaussian Initialization.}}
            \label{fig:block1}
        \end{minipage}
    \end{minipage}
    \hfill
    \begin{minipage}[t]{0.31\linewidth}
        \centering
        \includegraphics[height=2.5cm,keepaspectratio]{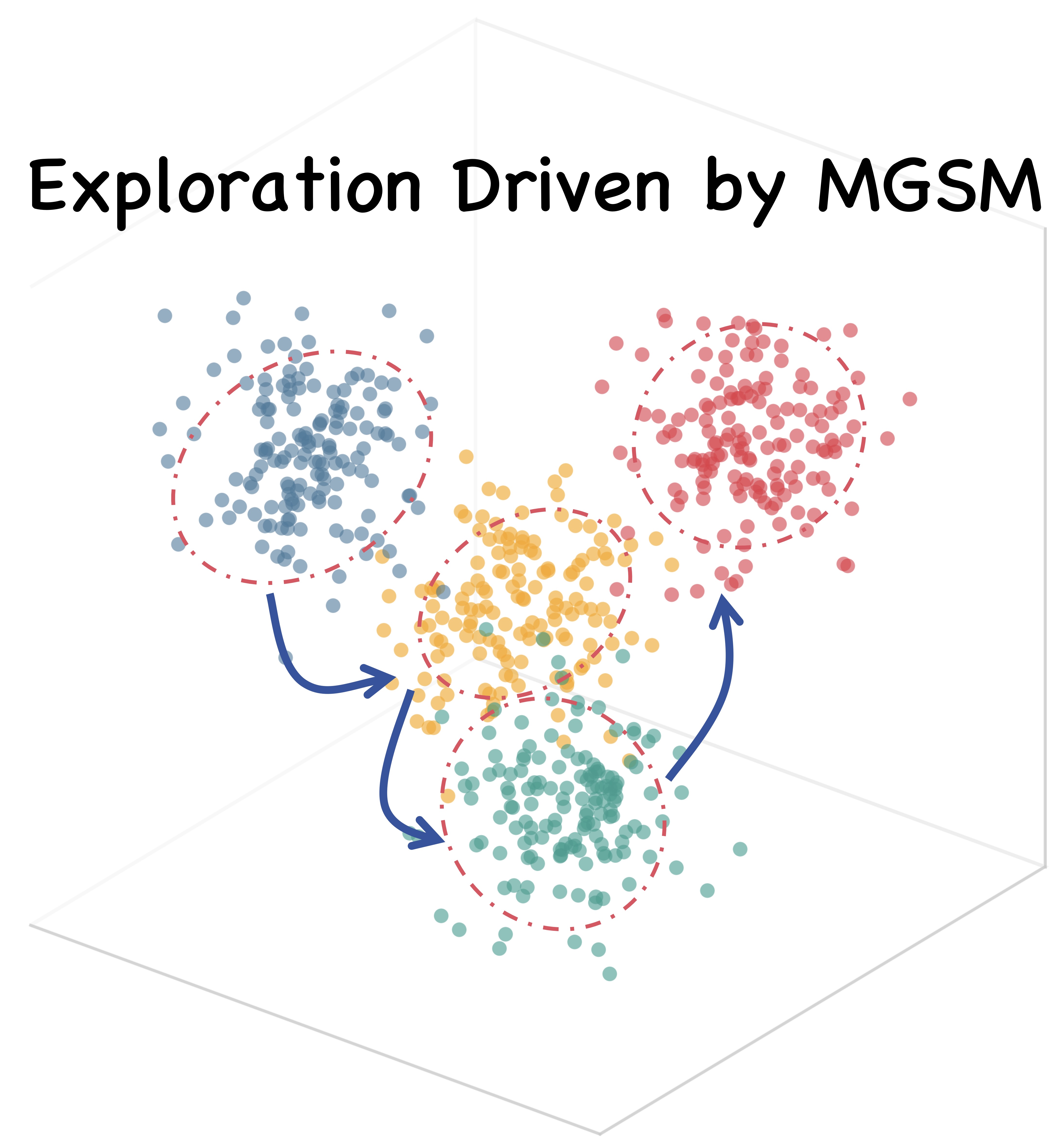}
        \vspace{\smallskipamount}
        \begin{minipage}[t][2cm][t]{\linewidth}
            \captionof{figure}{\textbf{Visualization of MGSM Exploration Strategy.}}
            \label{fig:block2}
        \end{minipage}
    \end{minipage}
    \hfill
    \begin{minipage}[t]{0.31\linewidth}
        \centering
        \includegraphics[height=2.5cm,keepaspectratio]{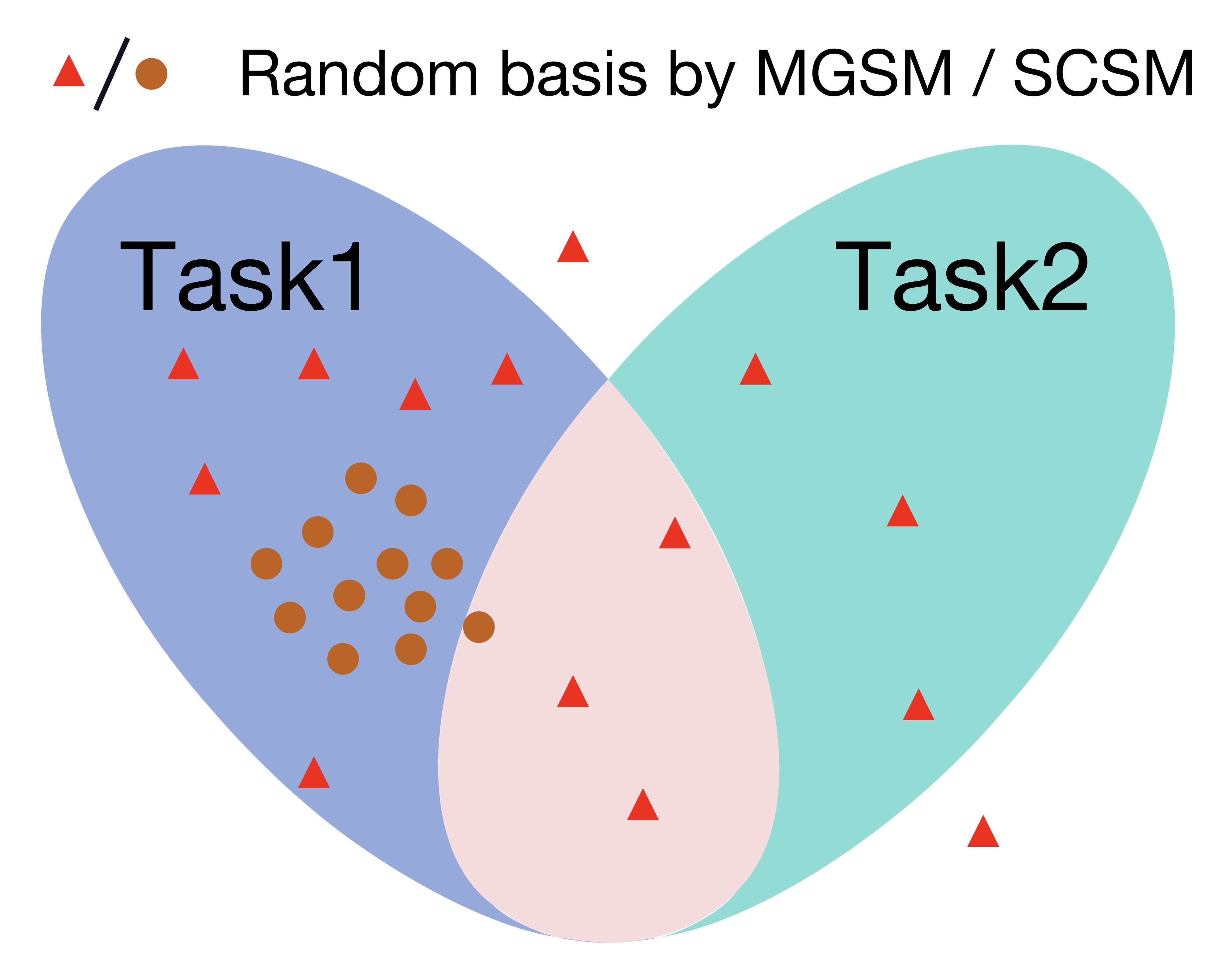}
        \vspace{\smallskipamount}
        \begin{minipage}[t][2cm][t]{\linewidth}
            \captionof{figure}{\textbf{Illustration of Random-Basis and Task-Relevant Regions.}}
            \label{fig:block3}
        \end{minipage}
    \end{minipage}
 \vskip -0.5in
\end{figure}

We now explain how MGSM simultaneously enhances RPL expressivity and preserves the numerical stability required by recursive analytic updates.

\noindent\textbf{MGSM-guided RPL construction enhances expressivity.}
First-session adaptation (FSA) adapts PTM representations using initial-task data~\cite{panos2023first,mcdonnell2024ranpac,zhou2024expandable}, which presumably provides a useful inductive bias for subsequent tasks that share statistical similarities.
Our work goes one step further by using the initial-task data to guide RPL construction. SCL-MGSM incrementally configures the RPL via a block-wise, data-driven procedure: candidate random bases are sampled from an adaptively updated parameter distribution and accepted only if the target-aligned residual criterion is met. Otherwise, the sampling distribution is adjusted and the search is redirected to explore a different scale (Fig.~\ref{fig:block2}).
This multi-scale exploration steers the search toward parameter regions better suited for downstream continual learning tasks, producing a more functionally diverse set of random bases that enhances the expressivity of the RPL (Fig.~\ref{fig:block3}). Theorem~\ref{sm} guarantees the convergence of this construction process.
In comparison, fixed-distribution sampling concentrates random bases on a narrow shell in parameter space (Fig.~\ref{fig:block1}), so conventional random initialization can only improve expressivity by inflating the RPL dimension.

\noindent\textbf{MGSM enables numerically stable analytic updates.}
MGSM-guided RPL achieves high expressivity without resorting to extremely high-dimensional projections, thereby avoiding the ill-conditioning that compromises analytic updates.
Furthermore, compared to the supervisory mechanism of SCSM~\cite{wang2017stochastic}, which also aims to achieve expressivity in low dimensions, MGSM adopts a relaxed, target-aligned acceptance criterion instead of SCSM's greedy residual-aligned criterion (Fig.~\ref{fig:block3}).
Combined with ridge regularization and block-wise updates, this relaxed criterion admits basis combinations that are not individually optimal but jointly effective, producing less collinear bases and better-conditioned Gram matrices for stable incremental updates.

\begin{algorithm}[h]
	\caption{SCL-MGSM}
	\label{Alg_MGSM}
    \textbf{Input}: Initial Task $\mathcal{D}_1$, Incremental Tasks $\{\mathcal{D}_t\}_{t=2}^{T}$, predefined error $\varepsilon$, max batches $B_{max}$, number of nodes $s$.
    \begin{algorithmic}[1] 
    \STATE \textit{(Optional) First-session adaptation (FSA) of the PTM.}
		\STATE \textbf{\textit{\# Modeling RPL During Initial Training}}
        \STATE Extract features from frozen PTMs by Eq.~\eqref{eq:PTM_prelim}
		\WHILE{The predefined residual error is not satisfied}
		    \STATE Recruit randomly generated nodes based on MGSM by Eq. (\ref{eq:core_sm})
		    \STATE Update output weights by Eq. (\ref{update_weight})
		\ENDWHILE
		\STATE Compute irreversible intermediate matrix $\boldsymbol{P}_{\mathrm{init}}$ by Eq. (\ref{eq:P_init})
		\STATE \textbf{\textit{\# Update Output Weights During Incremental Learning.}}
            \FOR{$t=2, 3,\dots,T$}
            \STATE Extract features from frozen PTMs by Eq.~\eqref{eq:PTM_prelim}
		    \STATE Update Output Weights by Eq. (\ref{eq:W_update_simple})
            \STATE Update irreversible intermediate matrix by Eq. (\ref{eq:P_update_simple})
            \ENDFOR
\end{algorithmic}
	\textbf{Output}: SCL-MGSM
\end{algorithm}

\section{Experiments}
\label{Experiments}

\subsection{Experiment Setting}

\textbf{Datasets:}
We conduct exemplar-free CIL experiments on four datasets, including ImageNet-R~\cite{hendrycks2021many}, ImageNet-A~\cite{hendrycks2021natural}, ObjectNet~\cite{barbu2019objectnet}, and Omnibenchmark~\cite{zhang2022benchmarking}. ImageNet-R, ImageNet-A, and ObjectNet each contain 200 classes, while Omnibenchmark contains 300 classes. For brevity, we refer to them as IN-R, IN-A, ObjNet, and OmniB, respectively.

\noindent\textbf{Compared Methods:}
We compare our proposed SCL-MGSM with four categories of CIL methods. (1) RPL-based methods: AnaCP~\cite{momenianacp}, LoRanPAC~\cite{peng2024loranpac}, G-ACIL~\cite{zhuang2024g}, RanPAC~\cite{mcdonnell2024ranpac}, KLDA~\cite{momeni2025continual}. (2) Prototype-based methods: EASE~\cite{zhou2024expandable}, SimpleCIL~\cite{zhou2024revisiting}. (3) Prompt-tuning-based methods: CODA-Prompt~\cite{smith2023coda}, APT~\cite{chen2025achieving}. (4) LoRA-based methods: SD-LoRA~\cite{wu2025sd}.

\noindent\textbf{Evaluation Protocol.}
We report two metrics for all methods: average accuracy \(A_{\text{avg}}=\frac{1}{t+1}\sum_{i=1}^{t}A_i\), where \(A_i\) is the test accuracy on \(D_{1:i}^{\text{test}}\) after the \(i\)-th incremental session, and final accuracy \(A_{\text{last}}\), which is the test accuracy on all learned tasks after the last incremental session.

\noindent\textbf{Implementation Details.}
For fair comparison, all methods are implemented using the frozen ViT-B/16-IN21K~\cite{rw2019timm} as PTMs.
For first-session adaptation and ablation studies, we use
AdaptFormer~\cite{chen2022adaptformer}, SSF~\cite{lian2022scaling}, and VPT~\cite{jia2022visual} as PEFT methods.
All results in Table \ref{tab:all_results} are averaged over 3 random seeds with standard error, where the initial-stage categories are non-overlapping across seeds. Detailed hyperparameter settings can be found in Appendix~\ref{a_details}.
\subsection{Main Results}

\begin{table*}[!t]

\caption{\textbf{Comparison of CIL Performance Across Methods and Benchmarks.} \textbf{Bold} and \underline{underline} mark the best and second-best results. $A_{\text{last}}$ and $A_{\text{avg}}$ denote final and average accuracy (\%). B-$m$ Inc-$n$ ($m$: initial classes, $n$: incremental classes; \textit{$m{=}0$}: equal split). Joint linear probe trains only the classifier head on all tasks, while Joint fine-tuning updates the entire PTM on all tasks.}
\vskip -0.05in
  \centering
  \setlength{\tabcolsep}{2pt}
  \scriptsize
    \resizebox{\textwidth}{!}{
    \begin{tabular}{l*{8}{c}}
      \toprule
      \textbf{Method} &
      \multicolumn{4}{c}{\textbf{\makecell{ImageNet-R  (B-0)}}} &
      \multicolumn{4}{c}{\textbf{\makecell{ImageNet-A  (B-0)}}} \\
      \cmidrule(lr){2-5} \cmidrule(lr){6-9}
      Joint linear probe &
        \multicolumn{4}{c}{67.23$_{\scriptscriptstyle\pm0.05}$} &
        \multicolumn{4}{c}{50.98$_{\scriptscriptstyle\pm0.65}$} \\
      Joint fine-tuning &
        \multicolumn{4}{c}{83.67$_{\scriptscriptstyle\pm0.14}$} &
        \multicolumn{4}{c}{60.99$_{\scriptscriptstyle\pm0.32}$} \\
      \cmidrule(lr){1-9}
      &
      \multicolumn{2}{c}{\textbf{Inc-5}} &
      \multicolumn{2}{c}{\textbf{Inc-10}} &
      \multicolumn{2}{c}{\textbf{Inc-5}} &
      \multicolumn{2}{c}{\textbf{Inc-10}} \\
      \cmidrule(lr){2-3} \cmidrule(lr){4-5} \cmidrule(lr){6-7} \cmidrule(lr){8-9}
      &
      \textbf{$A_{\text{last}}$} & \textbf{$A_{\text{avg}}$} &
      \textbf{$A_{\text{last}}$} & \textbf{$A_{\text{avg}}$} &
      \textbf{$A_{\text{last}}$} & \textbf{$A_{\text{avg}}$} &
      \textbf{$A_{\text{last}}$} & \textbf{$A_{\text{avg}}$} \\
      \midrule
      CODA-Prompt &
        57.52$_{\scriptscriptstyle\pm0.68}$ & 65.71$_{\scriptscriptstyle\pm0.00}$ & 68.03$_{\scriptscriptstyle\pm0.42}$ & 74.57$_{\scriptscriptstyle\pm0.74}$
        & 28.81$_{\scriptscriptstyle\pm0.60}$ & 41.45$_{\scriptscriptstyle\pm0.47}$ & 39.21$_{\scriptscriptstyle\pm0.83}$ & 50.17$_{\scriptscriptstyle\pm0.97}$ \\
      SD-LoRA &
        66.29$_{\scriptscriptstyle\pm1.10}$ & 73.67$_{\scriptscriptstyle\pm1.19}$ & 72.25$_{\scriptscriptstyle\pm0.72}$ & 78.14$_{\scriptscriptstyle\pm0.80}$
        & 32.92$_{\scriptscriptstyle\pm1.07}$ & 46.00$_{\scriptscriptstyle\pm1.32}$ & 45.95$_{\scriptscriptstyle\pm0.53}$ & 58.41$_{\scriptscriptstyle\pm1.01}$ \\
      SimpleCIL &
        54.55$_{\scriptscriptstyle\pm0.00}$ & 61.63$_{\scriptscriptstyle\pm0.79}$ & 54.55$_{\scriptscriptstyle\pm0.00}$ & 61.22$_{\scriptscriptstyle\pm0.60}$
        & 48.85$_{\scriptscriptstyle\pm0.00}$ & 60.49$_{\scriptscriptstyle\pm0.91}$ & 48.85$_{\scriptscriptstyle\pm0.00}$ & 59.96$_{\scriptscriptstyle\pm0.88}$ \\
      EASE &
        70.65$_{\scriptscriptstyle\pm0.52}$ & \underline{76.69$_{\scriptscriptstyle\pm0.24}$} & 74.13$_{\scriptscriptstyle\pm0.57}$ & \underline{80.38$_{\scriptscriptstyle\pm0.60}$}
        & 43.27$_{\scriptscriptstyle\pm2.55}$ & 55.80$_{\scriptscriptstyle\pm0.46}$ & 45.97$_{\scriptscriptstyle\pm1.93}$ & 58.09$_{\scriptscriptstyle\pm2.42}$ \\

      APT &
        70.55$_{\scriptscriptstyle\pm0.40}$ & 76.46$_{\scriptscriptstyle\pm0.71}$ & \underline{75.07$_{\scriptscriptstyle\pm0.32}$} & 79.97$_{\scriptscriptstyle\pm0.57}$
        & 37.19$_{\scriptscriptstyle\pm2.11}$ & 45.72$_{\scriptscriptstyle\pm3.80}$ & 51.57$_{\scriptscriptstyle\pm0.99}$ & 61.72$_{\scriptscriptstyle\pm0.44}$ \\
      KLDA &
        64.82$_{\scriptscriptstyle\pm0.30}$ & 71.02$_{\scriptscriptstyle\pm0.66}$ & 64.82$_{\scriptscriptstyle\pm0.30}$ & 70.69$_{\scriptscriptstyle\pm0.50}$
        & 51.64$_{\scriptscriptstyle\pm0.16}$ & 58.23$_{\scriptscriptstyle\pm0.76}$ & 51.64$_{\scriptscriptstyle\pm0.16}$ & 57.69$_{\scriptscriptstyle\pm0.84}$ \\
      RanPAC &
        68.72$_{\scriptscriptstyle\pm0.40}$ & 74.84$_{\scriptscriptstyle\pm0.35}$ & 71.25$_{\scriptscriptstyle\pm0.45}$ & 76.58$_{\scriptscriptstyle\pm0.47}$
        & 53.50$_{\scriptscriptstyle\pm1.58}$ & 60.56$_{\scriptscriptstyle\pm1.20}$ & 52.67$_{\scriptscriptstyle\pm0.39}$ & 61.59$_{\scriptscriptstyle\pm0.76}$ \\
      G-ACIL &
        67.05$_{\scriptscriptstyle\pm0.30}$ & 73.95$_{\scriptscriptstyle\pm0.44}$ & 67.05$_{\scriptscriptstyle\pm0.30}$ & 73.64$_{\scriptscriptstyle\pm0.29}$
        & 45.05$_{\scriptscriptstyle\pm0.46}$ & 57.78$_{\scriptscriptstyle\pm0.29}$ & 45.05$_{\scriptscriptstyle\pm0.46}$ & 57.26$_{\scriptscriptstyle\pm0.37}$ \\
      LoRanPAC &
        70.78$_{\scriptscriptstyle\pm0.12}$ & 76.58$_{\scriptscriptstyle\pm0.20}$ & 71.81$_{\scriptscriptstyle\pm0.09}$ & 77.12$_{\scriptscriptstyle\pm0.06}$
        & \underline{54.84$_{\scriptscriptstyle\pm0.25}$} & \underline{64.57$_{\scriptscriptstyle\pm0.38}$} & 54.62$_{\scriptscriptstyle\pm0.31}$ & \underline{64.16$_{\scriptscriptstyle\pm0.55}$} \\
      AnaCP &
        \underline{71.34$_{\scriptscriptstyle\pm0.46}$} & 76.53$_{\scriptscriptstyle\pm0.80}$ & 73.60$_{\scriptscriptstyle\pm1.15}$ & 78.85$_{\scriptscriptstyle\pm0.32}$
        & 53.08$_{\scriptscriptstyle\pm0.64}$ & 62.80$_{\scriptscriptstyle\pm0.55}$ & \underline{55.22$_{\scriptscriptstyle\pm0.39}$} & 63.99$_{\scriptscriptstyle\pm0.61}$ \\
      \midrule

      \makecell[l]{SCL-MGSM} &
        \textbf{72.64$_{\scriptscriptstyle\pm0.24}$} & \textbf{77.70$_{\scriptscriptstyle\pm0.14}$} & \textbf{77.12$_{\scriptscriptstyle\pm0.21}$} & \textbf{82.23$_{\scriptscriptstyle\pm0.18}$}
        & \textbf{55.79$_{\scriptscriptstyle\pm0.23}$} & \textbf{64.93$_{\scriptscriptstyle\pm0.31}$} & \textbf{56.05$_{\scriptscriptstyle\pm0.32}$} & \textbf{65.29$_{\scriptscriptstyle\pm0.47}$} \\
      \midrule
      \addlinespace[0.3em]
      \textbf{Method} &
      \multicolumn{4}{c}{\textbf{\makecell{ObjectNet  (B-0)}}} &
      \multicolumn{4}{c}{\textbf{\makecell{OmniBenchmark  (B-0)}}} \\
      \cmidrule(lr){2-5} \cmidrule(lr){6-9}
      Joint linear probe &
        \multicolumn{4}{c}{55.90$_{\scriptscriptstyle\pm0.28}$} &
        \multicolumn{4}{c}{78.56$_{\scriptscriptstyle\pm0.20}$} \\
      Joint fine-tuning &
        \multicolumn{4}{c}{67.53$_{\scriptscriptstyle\pm0.13}$} &
        \multicolumn{4}{c}{82.27 $_{\scriptscriptstyle\pm0.26}$} \\
      \cmidrule(lr){1-9}
      &
      \multicolumn{2}{c}{\textbf{Inc-5}} &
      \multicolumn{2}{c}{\textbf{Inc-10}} &
      \multicolumn{2}{c}{\textbf{Inc-5}} &
      \multicolumn{2}{c}{\textbf{Inc-10}} \\
      \cmidrule(lr){2-3} \cmidrule(lr){4-5} \cmidrule(lr){6-7} \cmidrule(lr){8-9}
      &
      \textbf{$A_{\text{last}}$} & \textbf{$A_{\text{avg}}$} &
      \textbf{$A_{\text{last}}$} & \textbf{$A_{\text{avg}}$} &
      \textbf{$A_{\text{last}}$} & \textbf{$A_{\text{avg}}$} &
      \textbf{$A_{\text{last}}$} & \textbf{$A_{\text{avg}}$} \\
      \midrule
      CODA-Prompt &
        45.17$_{\scriptscriptstyle\pm0.49}$ & 56.44$_{\scriptscriptstyle\pm2.05}$ & 53.14$_{\scriptscriptstyle\pm0.07}$ & 63.68$_{\scriptscriptstyle\pm1.88}$
        & 58.31$_{\scriptscriptstyle\pm0.80}$ & 69.56$_{\scriptscriptstyle\pm0.33}$ & 64.17$_{\scriptscriptstyle\pm0.58}$ & 74.06$_{\scriptscriptstyle\pm0.48}$ \\
      SD-LoRA &
        50.02$_{\scriptscriptstyle\pm1.24}$ & 61.95$_{\scriptscriptstyle\pm0.60}$ & 55.72$_{\scriptscriptstyle\pm0.98}$ & 66.74$_{\scriptscriptstyle\pm1.81}$
        & 55.90$_{\scriptscriptstyle\pm1.55}$ & 69.22$_{\scriptscriptstyle\pm1.10}$ & 64.04$_{\scriptscriptstyle\pm0.15}$ & 74.25$_{\scriptscriptstyle\pm0.92}$ \\
      SimpleCIL &
        53.58$_{\scriptscriptstyle\pm0.00}$ & 63.15$_{\scriptscriptstyle\pm2.06}$ & 53.58$_{\scriptscriptstyle\pm0.00}$ & 62.70$_{\scriptscriptstyle\pm1.97}$
        & 73.15$_{\scriptscriptstyle\pm0.00}$ & 81.10$_{\scriptscriptstyle\pm0.58}$ & 73.15$_{\scriptscriptstyle\pm0.00}$ & 80.85$_{\scriptscriptstyle\pm0.58}$ \\
      EASE &
        54.34$_{\scriptscriptstyle\pm0.71}$ & 65.30$_{\scriptscriptstyle\pm2.67}$ & 57.59$_{\scriptscriptstyle\pm0.33}$ & 68.34$_{\scriptscriptstyle\pm2.41}$
        & 73.02$_{\scriptscriptstyle\pm0.20}$ & 81.09$_{\scriptscriptstyle\pm0.80}$ & 73.27$_{\scriptscriptstyle\pm0.29}$ & 81.04$_{\scriptscriptstyle\pm0.59}$ \\

      APT &
        56.05$_{\scriptscriptstyle\pm0.51}$ & 66.41$_{\scriptscriptstyle\pm1.02}$ & 60.54$_{\scriptscriptstyle\pm0.22}$ & 70.69$_{\scriptscriptstyle\pm1.60}$
        & 62.99$_{\scriptscriptstyle\pm0.20}$ & 73.17$_{\scriptscriptstyle\pm0.23}$ & 66.12$_{\scriptscriptstyle\pm0.39}$ & 74.97$_{\scriptscriptstyle\pm0.04}$ \\
      KLDA &
        57.92$_{\scriptscriptstyle\pm0.04}$ & 67.23$_{\scriptscriptstyle\pm1.89}$ & 57.92$_{\scriptscriptstyle\pm0.04}$ & 66.76$_{\scriptscriptstyle\pm1.83}$
        & 75.27$_{\scriptscriptstyle\pm0.04}$ & 83.48$_{\scriptscriptstyle\pm0.43}$ & 75.27$_{\scriptscriptstyle\pm0.04}$ & 83.23$_{\scriptscriptstyle\pm0.42}$ \\
      RanPAC &
        59.03$_{\scriptscriptstyle\pm0.41}$ & 69.02$_{\scriptscriptstyle\pm1.76}$ & 61.90$_{\scriptscriptstyle\pm0.41}$ & 71.27$_{\scriptscriptstyle\pm1.84}$
        & 77.37$_{\scriptscriptstyle\pm0.23}$ & 85.33$_{\scriptscriptstyle\pm0.59}$ & 77.34$_{\scriptscriptstyle\pm0.17}$ & 85.00$_{\scriptscriptstyle\pm0.57}$ \\
      G-ACIL &
        57.60$_{\scriptscriptstyle\pm0.26}$ & 68.64$_{\scriptscriptstyle\pm2.39}$ & 57.60$_{\scriptscriptstyle\pm0.26}$ & 68.19$_{\scriptscriptstyle\pm2.29}$
        & 76.41$_{\scriptscriptstyle\pm0.19}$ & 84.94$_{\scriptscriptstyle\pm0.75}$ & 76.41$_{\scriptscriptstyle\pm0.17}$ & 84.70$_{\scriptscriptstyle\pm0.73}$ \\
      LoRanPAC &
        61.89$_{\scriptscriptstyle\pm0.20}$ & 71.35$_{\scriptscriptstyle\pm1.93}$ & 63.57$_{\scriptscriptstyle\pm0.30}$ & 72.48$_{\scriptscriptstyle\pm1.82}$
        & 76.68$_{\scriptscriptstyle\pm0.05}$ & 85.04$_{\scriptscriptstyle\pm0.80}$ & 76.70$_{\scriptscriptstyle\pm0.33}$ & 84.96$_{\scriptscriptstyle\pm0.69}$ \\
      AnaCP &
        \underline{62.66$_{\scriptscriptstyle\pm0.62}$} & \underline{72.14$_{\scriptscriptstyle\pm0.15}$} & \underline{64.68$_{\scriptscriptstyle\pm0.17}$} & \underline{72.81$_{\scriptscriptstyle\pm0.94}$}
        & \underline{78.64$_{\scriptscriptstyle\pm0.26}$} & \underline{86.27$_{\scriptscriptstyle\pm1.08}$} & \underline{78.81$_{\scriptscriptstyle\pm0.24}$} & \underline{86.45$_{\scriptscriptstyle\pm0.71}$} \\
        
      \midrule

      \makecell[l]{SCL-MGSM} &
        \textbf{63.87$_{\scriptscriptstyle\pm0.25}$} & \textbf{73.40$_{\scriptscriptstyle\pm0.92}$} & \textbf{66.45$_{\scriptscriptstyle\pm0.45}$} & \textbf{75.30$_{\scriptscriptstyle\pm2.03}$}
        & \textbf{79.83$_{\scriptscriptstyle\pm0.33}$} & \textbf{86.79$_{\scriptscriptstyle\pm0.60}$} & \textbf{79.90$_{\scriptscriptstyle\pm0.35}$} & \textbf{86.92$_{\scriptscriptstyle\pm0.73}$} \\
      \bottomrule
    \end{tabular}
    }

    \label{tab:all_results}
    \vskip -0.1in
\end{table*}

As shown in Table~\ref{tab:all_results}, RPL-based methods consistently outperform prompt-tuning, LoRA-based, and prototype-based methods on most benchmarks, benefiting from random feature-space expansion. 
Among all compared methods, SCL-MGSM achieves the best $A_{\text{last}}$ and $A_{\text{avg}}$ across all eight evaluation settings, demonstrating consistent superiority over existing approaches.
Specifically, on ImageNet-R with 20 sequential tasks, SCL-MGSM surpasses the second-best method by +2.05\% in $A_{\text{last}}$ and +1.85\% in $A_{\text{avg}}$. On ObjectNet with 20 sequential tasks, the margins reach +1.77\% and +2.49\%, respectively. On ImageNet-A, SCL-MGSM consistently outperforms all competitors under both 40-task and 20-task protocols.
Moreover, SCL-MGSM surpasses the joint linear probe on all datasets, and on ObjectNet the gap to joint fine-tuning, which serves as the upper bound, narrows to approximately 1\%.
Notably, the initial-stage categories are non-overlapping across the three seeds, yet SCL-MGSM yields consistent gains in all cases, confirming that MGSM-guided RPL construction generalizes across different initialization tasks.
These gains are primarily attributed to MGSM, a data-guided mechanism in the initial stage that progressively selects target-aligned random bases with convergence guarantees (Theorem~\ref{sm}), constructing a compact yet expressive RPL that enhances the PTM's representations, even without directly training the PTM on each task. The resulting well-conditioned feature space further benefits the stability of recursive analytic updates during incremental stages. The final hidden sizes of SCL-MGSM are reported in Appendix.

\begin{figure*}[!t]
\centering
\begin{minipage}[t]{0.4\textwidth}
    \centering
    \vspace{0pt}
    {\captionsetup{position=top}
    \captionof{table}{\textbf{Performance Comparison of RPL Construction Strategies Without FSA.} }
    \label{tab:RPL_performance}}
    
\resizebox{\linewidth}{!}{
        \begin{tabular}{lccc}
        \toprule
        \textbf{Dataset} & \textbf{RPL} & \textbf{$A_{\mathrm{Last}}$(\%)} & \textbf{$A_{\mathrm{Avg}}$(\%)} \\
        \midrule
        \multirow{3}{*}{\shortstack {\textit{IN-R}}} & MGSM & \textbf{70.09±0.13} & \textbf{75.90±0.08}  \\
        & SCSM  & 64.78±0.09   & 70.51±0.44  \\
        & RI    & 63.00±0.22   & 72.05±0.39  \\
        \midrule
        \multirow{3}{*}{\shortstack {\textit{IN-A}}} & MGSM & \textbf{53.63±0.74} & \textbf{62.45±0.59}  \\
        & SCSM  & 44.85±1.47 & 54.40±1.88  \\
        & RI    & 47.31±0.43 & 58.21±0.37  \\
        \midrule
        \multirow{3}{*}{\shortstack {\textit{ObjNet}}} & MGSM & \textbf{59.61±0.30}  & \textbf{69.12±2.31} \\
        & SCSM  & 55.46±0.33 & 64.98±2.43 \\
        & RI    & 55.91±0.14 & 66.08±1.94  \\
        \midrule
        \multirow{3}{*}{\shortstack {\textit{OmniB}}} & MGSM & \textbf{77.99±0.17}  & \textbf{85.67±0.81} \\
        & SCSM  & 71.43±0.11 & 79.34±0.93 \\
        & RI    & 74.91±0.04 & 82.54±0.68  \\
        \bottomrule
        \end{tabular}
}
\end{minipage}
\hfill
\begin{minipage}[t]{0.58\textwidth}
    \centering
    \vspace{0pt}
    \vspace*{1.2em}
    \includegraphics[width=\linewidth]{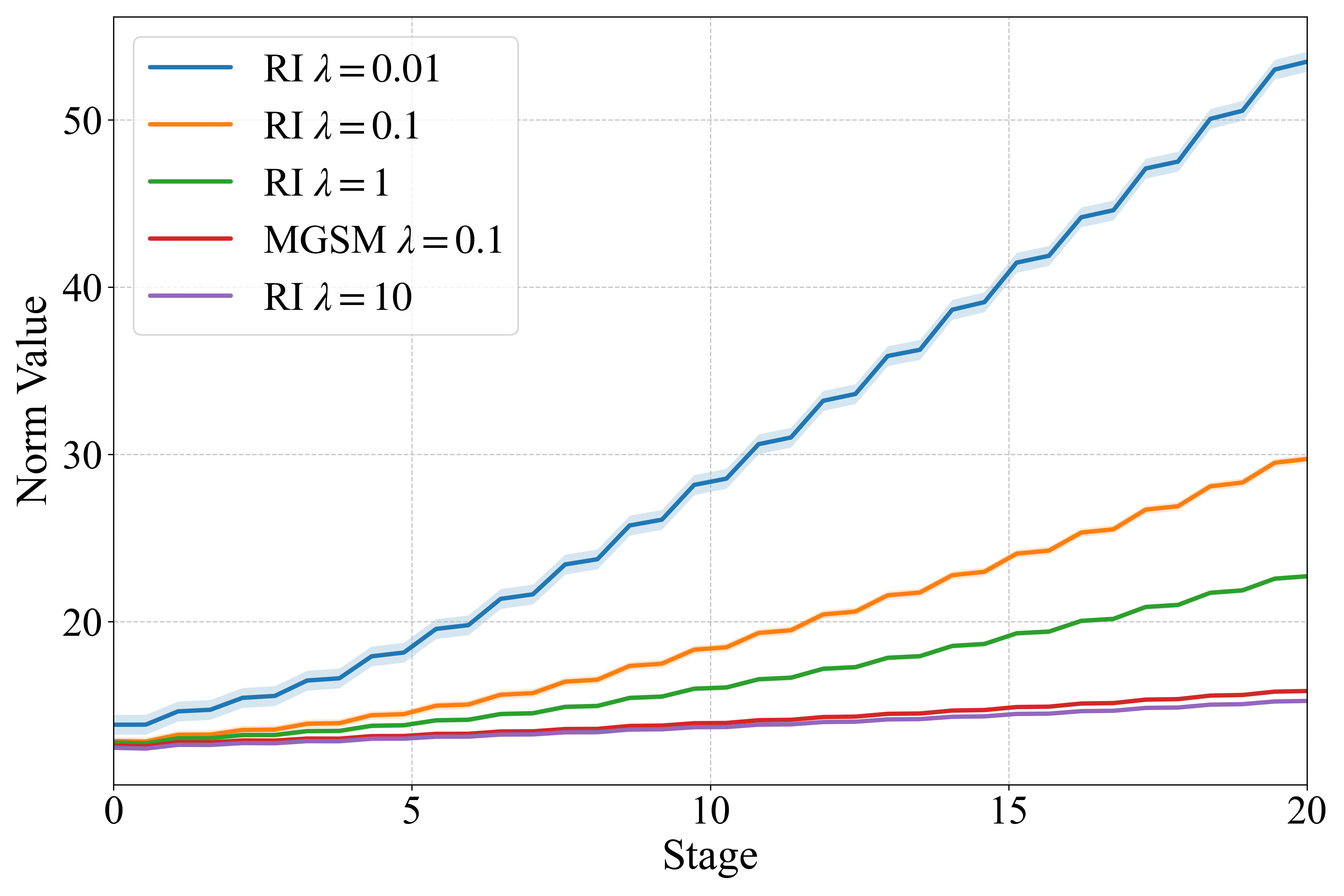}
    \captionof{figure}{\textbf{Comparison of $\boldsymbol{P}_t$ Norm Curves.}}
    \label{fig:n_c}
\end{minipage}

\end{figure*}

\subsection{Mechanism Analysis}

\noindent\textbf{Expressivity under different RPL construction strategies.}
To assess whether MGSM improves RPL expressivity, we compare three construction strategies: MGSM, SCSM (the supervisory mechanism in SCNs~\cite{wang2017stochastic}), and random initialization (RI).
For a fair comparison, all methods yield an RPL with a final hidden size of 10{,}000 without FSA, and the output weights during incremental stages are computed via recursive ridge regression.
As shown in Table~\ref{tab:RPL_performance}, MGSM consistently outperforms both alternatives across all four benchmarks, achieving gains of approximately 5\%--10\% in $A_{\mathrm{Last}}$ over SCSM.
On datasets with severe domain gaps (\eg ImageNet-A), RI degrades substantially, confirming that an RPL composed of limited unguided random bases lacks the expressivity required for out-of-distribution data.
SCSM, while data-guided, employs an overly strict greedy criterion that over-specializes to the initial task, often underperforming even RI on frozen PTM features.
In contrast, MGSM’s relaxed, target-aligned criterion selects diverse, non-redundant bases that enrich RPL expressivity while keeping the projections adapted to downstream tasks.
We further visualize the basis quality on ImageNet-A in Figure~\ref{fig:heatmap}: MGSM produces an almost diagonal cosine-similarity matrix, indicating near-orthogonal basis vectors that enrich the function space, whereas SCSM exhibits pronounced off-diagonal values reflecting high redundancy among bases. This further confirms that MGSM constructs a more expressive RPL, which contributes to stronger CIL performance.

\noindent\textbf{Stability of MGSM-based continual learning.}
As discussed in Section~\ref{sec:preliminary}, stable recursive updates require a well-conditioned random feature matrix.
On ImageNet-A (Figure~\ref{fig:condition_number}), we estimate the condition number of the random feature matrix on a randomly sampled subset of samples. Under MGSM-guided RPL, it is comparable to RI, whereas under SCSM-guided RPL it is markedly larger, indicating a substantially more ill-conditioned matrix and weaker numerical stability for recursive updates.
We also track $\lVert \boldsymbol{P}_t \rVert_F$ over incremental stages (Figure~\ref{fig:n_c}): for RI, $\lVert \boldsymbol{P}_t \rVert_F$ grows steadily, and suppressing this growth requires an excessively large $\lambda$ (\eg $\lambda=10$) that causes severe underfitting. MGSM with a moderate $\lambda=0.1$ maintains a controlled $\lVert \boldsymbol{P}_t \rVert_F$ comparable to heavily regularized RI.
These results confirm that MGSM's criterion contributes to both expressivity and stability.

\begin{figure*}[!t]
    \centering
    \begin{minipage}[t]{0.44\textwidth}
        \centering
        \includegraphics[height=0.16\textheight,width=\linewidth,keepaspectratio]{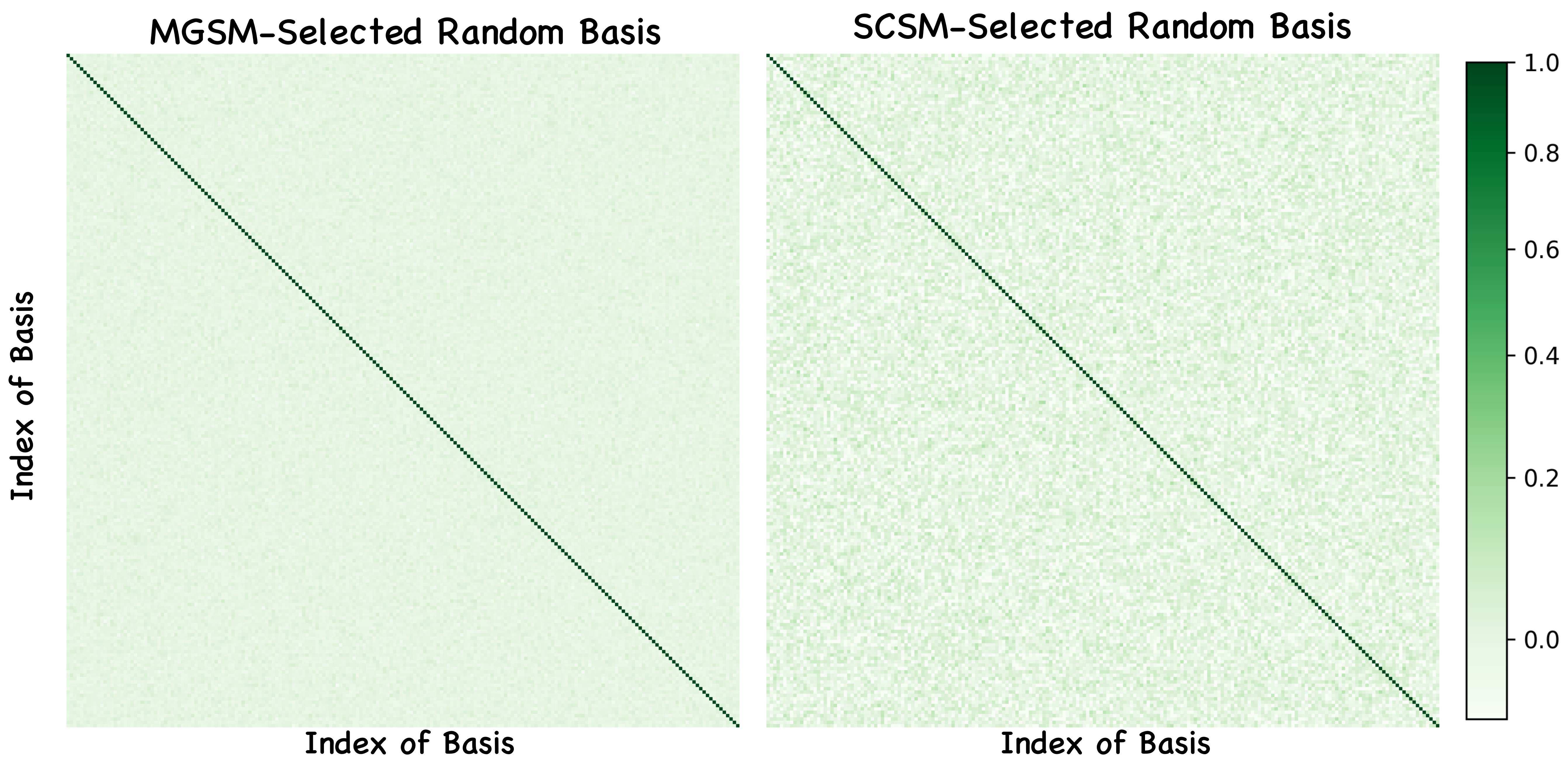}
        \caption{\textbf{Comparison of basis cosine similarity across different strategies.}}
        \label{fig:heatmap}
    \end{minipage}
    \hfill
    \begin{minipage}[t]{0.18\textwidth}
        \centering
        \includegraphics[height=0.16\textheight,width=\linewidth,keepaspectratio]{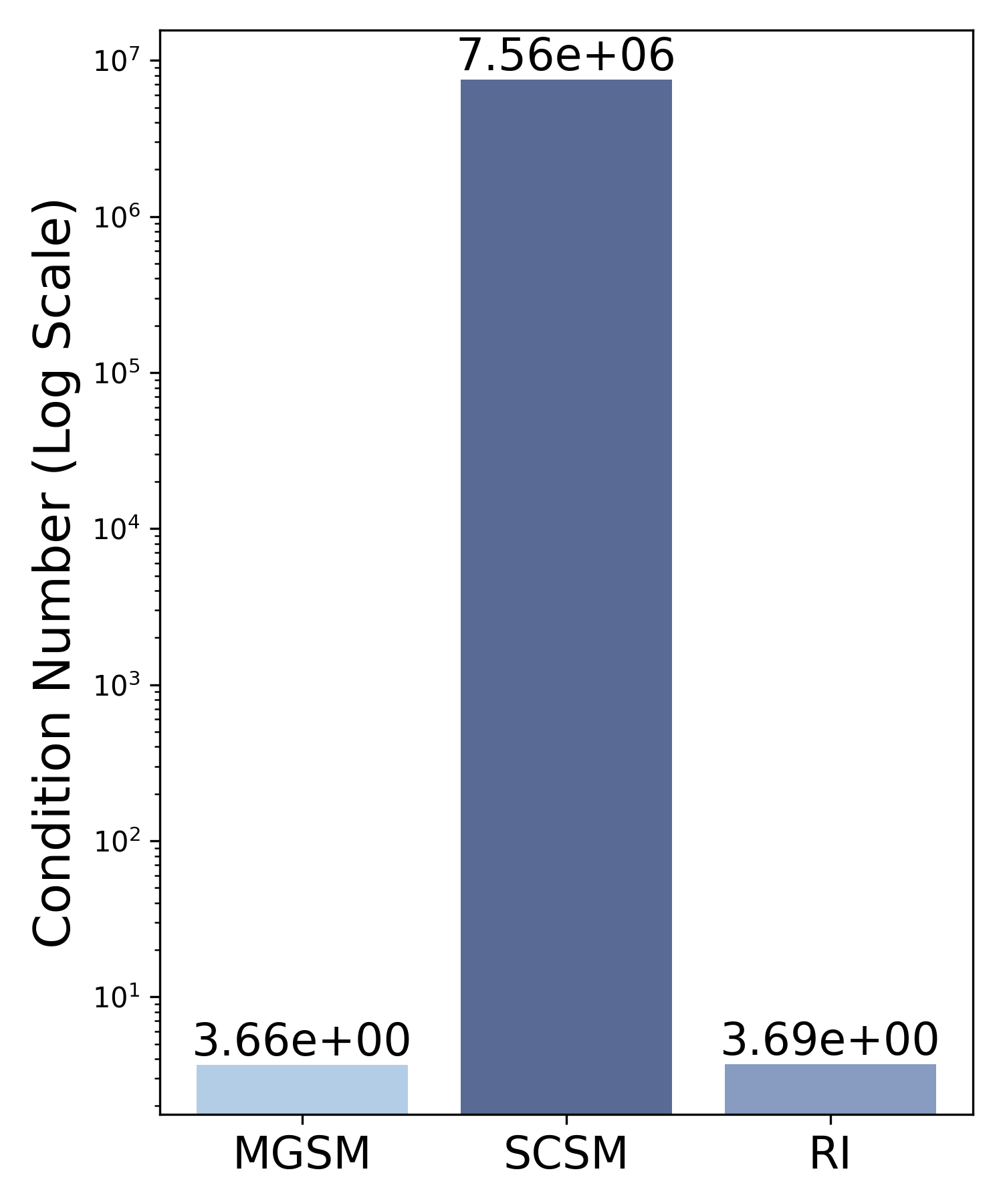}
        \caption{\textbf{Comparison of Condition Numbers.}}
        \label{fig:condition_number}
    \end{minipage}
    \hfill
    \begin{minipage}[t]{0.34\textwidth}
        \centering
        \includegraphics[height=0.16\textheight,width=\linewidth,keepaspectratio]{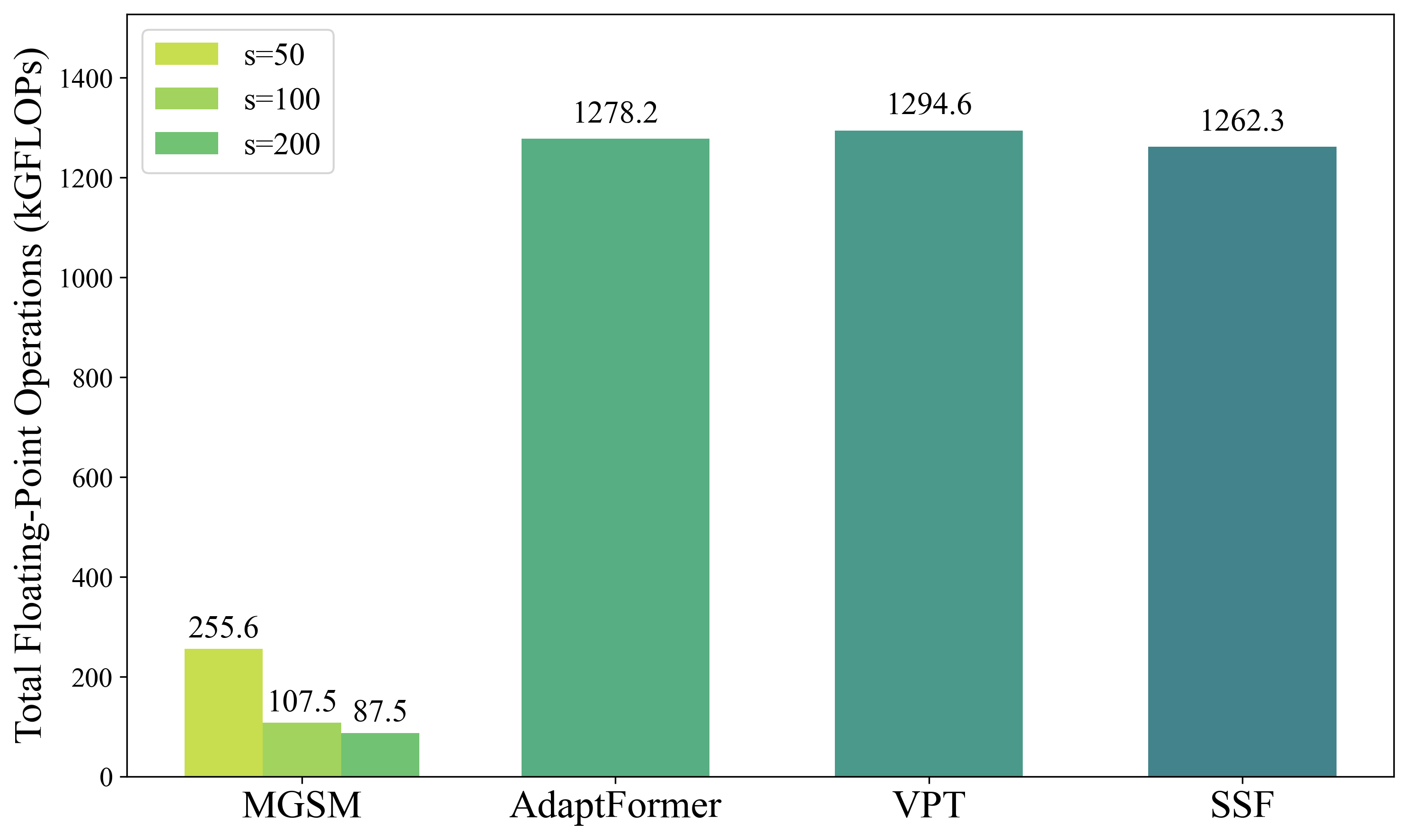}
        \caption{\textbf{Comparison of Computational Cost.}}
        \label{fig:gflops}
    \end{minipage}

\end{figure*}

\begin{figure*}[!t]
\centering
\begin{minipage}[t]{0.35\textwidth}
\vspace{0pt}
\centering
{\captionsetup{position=top}
\captionof{table}{\textbf{Performance Comparison of $s$ and $B_{max}$ on ImageNet-R (B-0 Inc-5).}}
\label{tab:s}}

\begin{small}
\begin{sc}
\resizebox{\linewidth}{!}{
\begin{tabular}{lcccc}
\toprule
\textbf{Method} & \textbf{$s$}& \textbf{$B_{max}$} & $A_{\mathrm{Avg}}$(\%) & \textbf{TIME (s)} \\
\midrule
\multirow{6}{*}{MGSM} & \multirow{3}{*}{10} &5 &75.29±0.15 &92.37±1.34 \\
& &10 &75.31±0.21 &192.21±2.64 \\
& &20 &\textbf{75.90±0.18} &262.56±1.45 \\
\cmidrule(lr){2-5}
& \multirow{3}{*}{100} &5 &74.82±0.17 &63.32±0.81 \\
& &10 &74.86±0.21 &70.37±1.59 \\
& &20 &75.17±0.29 &82.45±2.33 \\
\midrule
\multirow{3}{*}{SCSM} & \multirow{3}{*}{1} &5 &69.84±0.34 &268.61±2.86 \\
& &10 &69.89±0.47 &276.31±2.79 \\
& &20 &70.51±0.29 &332.18±2.12 \\
\bottomrule
\end{tabular}}
\end{sc}
\end{small}
\end{minipage}\hfill
\begin{minipage}[t]{0.63\textwidth}
\vspace{0pt}
\centering
\includegraphics[width=\linewidth]{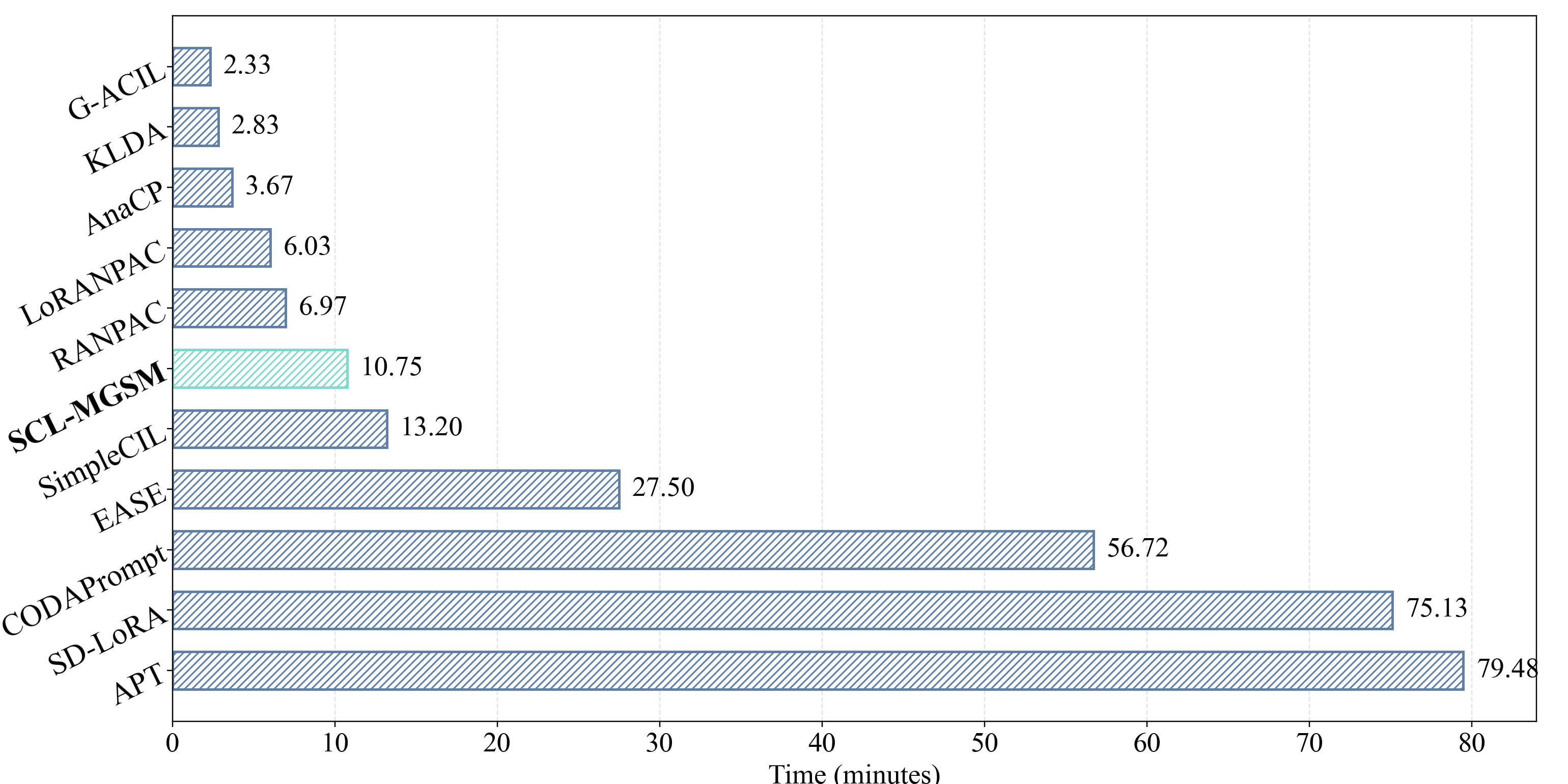}
\vspace{-1.8em}
{\captionsetup{position=bottom}
\captionof{figure}{\textbf{Comparison of Training Time Across Representative CIL Methods.}}
\label{fig:time_consume}}
\end{minipage}

\end{figure*}

\subsection{Hyperparameter Sensitivity}
\noindent\textbf{Analysis of adaptive scaling of $\xi$.}
Figure~\ref{fig:xi} compares two fixed $\xi$ values (0.08 and 0.008) with the adaptive (ADP) strategy on four datasets.
$\xi$ is a scaling factor that controls the sampling range for candidate random bases during RPL construction in SCL-MGSM.
Both fixed $\xi$ settings yield weaker results than the adaptive strategy (ADP), which outperforms them across all four benchmarks.
A fixed $\xi$ keeps the sampling range unchanged throughout the initial construction phase. As the RPL grows and its modeling capacity increases, candidates drawn from the same fixed range become less likely to pass the MGSM acceptance criterion, limiting further effective expansion of the RPL.
The adaptive strategy adjusts $\xi$ according to the current RPL state, so that later construction iterations can sample from a range that remains compatible with the MGSM criterion. This allows SCL-MGSM to explore a broader set of function spaces during construction, producing an RPL that better adapts to diverse downstream tasks.

\noindent\textbf{Analysis of sensitivity to $s$ and $B_{\max}$.}
In Table~\ref{tab:s}, $B_{\text{max}}$ is the maximum number of Gaussian-sampled batches, and $s$ is the number of hidden units per batch. Larger $s$ implies more concurrently integrated units, accelerating convergence.
The same table indicates that a tenfold variation in $s$ negligibly affects overall performance, allowing $s$ to be increased for substantially faster configuration. Concurrently, a larger $B_{\text{max}}$ enables broader function space exploration, improving optimal hidden unit selection probability. Furthermore, SCSM is significantly slower than MGSM, primarily because: first, its unit-by-unit configuration requires more forward computations, and second, its Moore-Penrose pseudoinverse calculation is computationally expensive.

\subsection{Additional Analysis}

\noindent\textbf{Computational efficiency.}
Figure~\ref{fig:time_consume} reports the end-to-end training time on ImageNet-R (Inc-10). RPL-based methods achieve the best CIL performance while being generally faster than other paradigms. Among them, SCL-MGSM introduces only a minor overhead over other RPL-based baselines, in exchange for notably improved performance.
The additional cost of MGSM stems solely from the increased number of recursive inverse updates during RPL construction (Figure~\ref{fig:gflops}), which remains far below PEFT. In the extreme case, one can pair a frozen PTM directly with MGSM, bypassing FSA entirely, to eliminate the adaptation cost while still benefiting from data-guided RPL construction. Additional analysis is provided in the appendix.
Moreover, during incremental learning, compared with directly enlarging the hidden dimension to improve expressivity, SCL-MGSM is significantly more memory-efficient. For example, constructing a 100{,}000-dimensional RPL via RI and employing the stabilized recursive solver of~\cite{peng2024loranpac} for incremental updates demands tens of times more peak GPU memory across CIL stages (Figure~\ref{fig:mem}), indicating that improving performance by unguidedly enlarging the RPL quickly encounters resource bottlenecks.

\begin{figure}[!t]
\centering
\begin{minipage}[t]{0.48\linewidth}
    \centering
    \includegraphics[width=\linewidth]{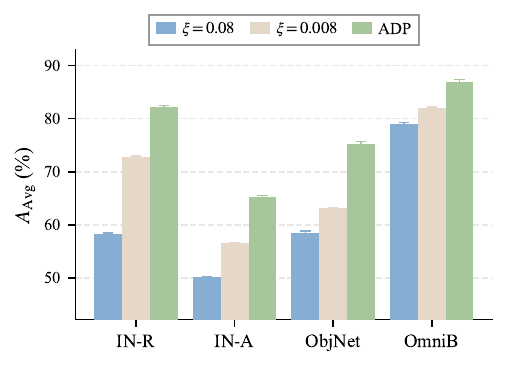}
    \caption{\textbf{Parameter Analysis of MGSM on Three Datasets.} ADP refers to adaptive $\xi$ adjustment.}
    \label{fig:xi}
\end{minipage}
\hfill
\begin{minipage}[t]{0.48\linewidth}
    \centering
    \includegraphics[width=\linewidth]{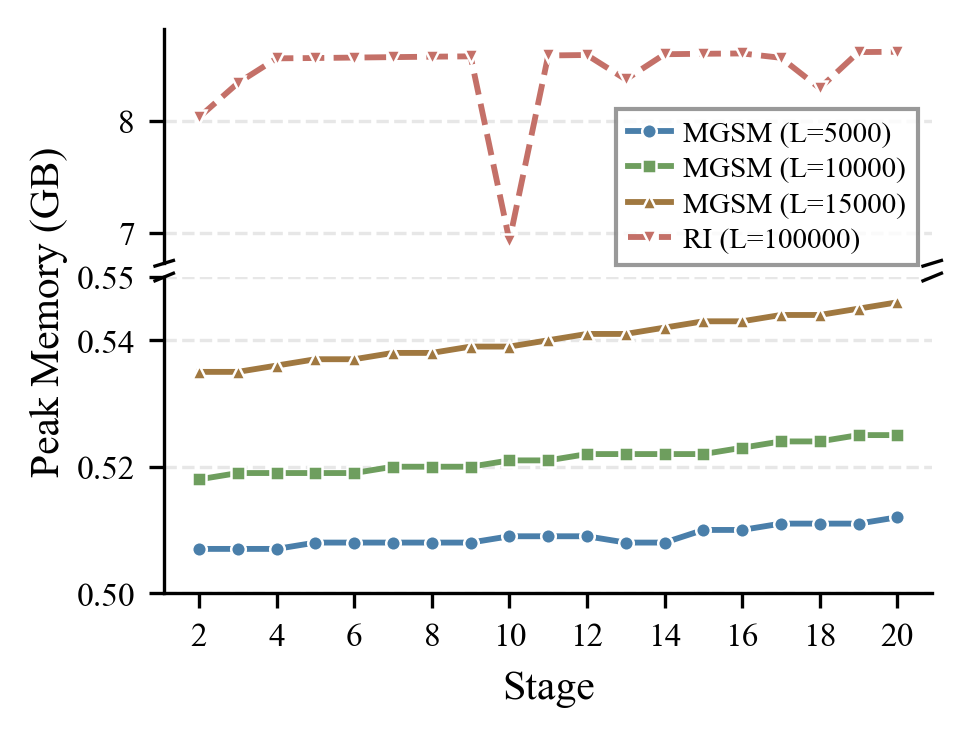}
    \caption{\textbf{Comparison of Peak GPU Memory.}}
    \label{fig:mem}
\end{minipage}
\vskip -0.2in
\end{figure}

\noindent\textbf{More analysis.} We present additional analysis in the appendix, including 1) the efficiency of SCL-MGSM across different PTM architectures, 2) the impact of first-session adaptation on RPL-based methods, and 3) the comparison of average forgetting across methods.

\section{Conclusion}
In this work, we propose SCL-MGSM to enhance pretrained model-based continual representation learning via guided random projection. MGSM employs a target-aligned residual criterion to progressively select informative and non-redundant random bases, constructing a compact yet expressive RPL whose dimension is adaptively determined rather than fixed a priori. On this well-conditioned random feature space, the analytic classifier is updated via recursive ridge regression without external stabilization mechanisms. Extensive experiments on seven exemplar-free CIL benchmarks demonstrate that SCL-MGSM achieves superior performance and training efficiency. Moreover, SCL-MGSM is compatible with both first-session adaptation and diverse PTM backbones, demonstrating strong adaptability across training protocols and architectural families.

{
    \small
    \bibliographystyle{splncs04}
    \bibliography{ref}
}

\clearpage
\onecolumn
\appendix

\section[Proof of Theorem]{Proof of Theorem \ref{sm}.}
\label{proof of MGSM}

\begin{proof}
Throughout, we assume \(\lambda>0\) and take
\(\boldsymbol{W}_{\beta_{L-s}}\) to be the ridge-regression solution
restricted to the first \(L-s\) hidden units at the current accepted
augmentation step.
Consider the matrix decomposition
\( \boldsymbol{H}_{L}=[ \boldsymbol{H}_{L-s}, \boldsymbol{H}_s] \). The
ridge-regularized Gram matrix is given by:
\begin{equation}
\boldsymbol{H}_L^\top \boldsymbol{H}_L + \lambda \boldsymbol{I}
= \begin{bmatrix}
\boldsymbol{H}_{L-s}^\top \boldsymbol{H}_{L-s} + \lambda \boldsymbol{I}_{L-s} & \boldsymbol{H}_{L-s}^\top \boldsymbol{H}_s\\[4pt]
\boldsymbol{H}_s^\top \boldsymbol{H}_{L-s} & \boldsymbol{H}_s^\top \boldsymbol{H}_s + \lambda \boldsymbol{I}_s
\end{bmatrix}.
\end{equation}
Using the block inversion formula, the lower-right block \( \boldsymbol{S} \) is given by:
\begin{equation}
\label{S}
\begin{split}
\boldsymbol{S} &= \left( \boldsymbol{H}_{s}^{\top} \boldsymbol{H}_{s} + \lambda \boldsymbol{I}_{s} \right) \\
&\quad - \boldsymbol{H}_{s}^{\top} \boldsymbol{H}_{L-s}
\left( \boldsymbol{H}^{\top}_{L-s} \boldsymbol{H}_{L-s} + \lambda \boldsymbol{I}_{L-s} \right)^{-1}
\boldsymbol{H}^{\top}_{L-s} \boldsymbol{H}_{s}.
\end{split}
\end{equation}
Since \(\lambda>0\), both diagonal blocks are positive definite, so \(\boldsymbol{S}\) is positive definite and invertible.
The updated solution for the output weight \( \boldsymbol{W}_{\beta_{L}^{\star}} \) arises from the closed-form solution of ridge regression (Eq. \ref{update_weight}), which, when adapted for the block matrix decomposition $\boldsymbol{H}_L=[\boldsymbol{H}_{L-s}, \boldsymbol{H}_s]$ using standard block matrix inversion techniques (general principles in \cite{tylavsky2005generalization}, with $\boldsymbol{S}$ in Eq. \ref{S} being the key Schur complement component for this inversion), yields the recursive update presented in Eq. \ref{wbl}.
This specific form is a direct consequence of the optimality conditions for block-wise ridge regression and is a well-established result in the context of sequential least squares (for detailed derivations, see e.g., \cite{greville1960some} and \cite{blum2012density}).

\begin{equation}
\label{wbl}
\boldsymbol{W}_{\beta_{L}^{\star}}
= \begin{pmatrix}
\boldsymbol{W}_{\beta_{L-s}} \\[3pt] \boldsymbol{0}
\end{pmatrix}
+ \begin{pmatrix}
\boldsymbol{\Delta} \\[3pt] \boldsymbol{S}^{-1} \boldsymbol{H}_{s}^{\top}
\end{pmatrix}
\bigl( \boldsymbol{y} - \boldsymbol{H}_{L-s} \boldsymbol{W}_{\beta_{L-s}} \bigr),
\end{equation}
where \( \boldsymbol{W}_{\beta_{L-s}} \) is the solution using the first $L-s$ nodes. The first term in Eq. \ref{wbl} extends this previous solution with zero-padding for the newly added $s$ nodes. The second term reflects the correction, based on the residual $\boldsymbol{y} - \boldsymbol{H}_{L-s} \boldsymbol{W}_{\beta_{L-s}}$. Specifically, the component $\boldsymbol{S}^{-1} \boldsymbol{H}_{s}^{\top}$ computes the optimal update for the weights of the new block $\boldsymbol{H}_s$, while $\boldsymbol{\Delta}$ adjusts the weights corresponding to $\boldsymbol{H}_{L-s}$, and $\boldsymbol{\Delta}$ is given by:
\begin{equation}
\begin{split}
\boldsymbol{\Delta} &= - (\boldsymbol{H}_{L-s}^\top \boldsymbol{H}_{L-s} + \lambda \boldsymbol{I}_{L-s})^{-1} \\
&\quad \boldsymbol{H}_{L-s}^\top \boldsymbol{H}_s \boldsymbol{S}^{-1} \boldsymbol{H}_s^\top.
\end{split}
\end{equation}
When the output weights are updated, the new residual is given by:
\begin{equation}
\boldsymbol{e}_{L} = \boldsymbol{y} - \boldsymbol{H}_{L}\boldsymbol{W}_{\beta_{L}^{\star }}.
\end{equation}
Substituting Eq. \ref{wbl} into \(\boldsymbol{H}_{L}\boldsymbol{W}_{\beta_{L}^{\star }}\), we obtain:
\begin{equation}
\begin{split}
\boldsymbol{H}_{L}\boldsymbol{W}_{\beta_{L}^{\star }} &= \boldsymbol{H}_{L-s}\boldsymbol{W}_{\beta_{L-s}} \\
&\quad + \boldsymbol{H}_{L-s}\boldsymbol{\Delta}\bigl(\boldsymbol{y}-\boldsymbol{H}_{L-s}\boldsymbol{W}_{\beta_{L-s}}\bigr) \\
&\quad + \boldsymbol{H}_{s} \boldsymbol{S}^{-1} \boldsymbol{H}_{s}^{\top} ( \boldsymbol{y}-\boldsymbol{H}_{L-s}\boldsymbol{W}_{\beta_{L-s}}).
\end{split}
\end{equation}
Since \( \boldsymbol{e}_{L-s} = \boldsymbol{y} - \boldsymbol{H}_{L-s}\boldsymbol{W}_{\beta_{L-s}} \), define
\begin{equation}
\boldsymbol{T}:=\boldsymbol{H}_{L-s}(\boldsymbol{H}_{L-s}^{\top}\boldsymbol{H}_{L-s}+\lambda\boldsymbol{I}_{L-s})^{-1}\boldsymbol{H}_{L-s}^{\top}\boldsymbol{H}_{s}.
\end{equation}
Then \( \boldsymbol{H}_{L-s}\boldsymbol{\Delta} = -\boldsymbol{T}\boldsymbol{S}^{-1}\boldsymbol{H}_{s}^{\top} \), and hence
\begin{equation}
\begin{split}
\boldsymbol{e}_{L} &= \boldsymbol{y}-\boldsymbol{H}_{L}\boldsymbol{W}_{\beta_{L}^{\star }} \\
&= ( \boldsymbol{y}-\boldsymbol{H}_{L-s}\boldsymbol{W}_{\beta_{L-s}}) - \boldsymbol{H}_{s} \boldsymbol{S}^{-1} \big( \boldsymbol{H}_{s}^{\top} \boldsymbol{e}_{L-s} \big) \\
&\quad + \boldsymbol{T}\boldsymbol{S}^{-1} \big( \boldsymbol{H}_{s}^{\top} \boldsymbol{e}_{L-s} \big).
\end{split}
\end{equation}
Defining \( \boldsymbol{v} = \boldsymbol{H}_s^\top \boldsymbol{e}_{L-s} \), we obtain:
\begin{equation}
\boldsymbol{e}_{L} = \boldsymbol{e}_{L-s} - \boldsymbol{H}_{s} \boldsymbol{S}^{-1} \boldsymbol{v} + \boldsymbol{T}\boldsymbol{S}^{-1}\boldsymbol{v}.
\end{equation}
For brevity, let \( \boldsymbol{z} = \boldsymbol{H}_s \boldsymbol{S}^{-1} \boldsymbol{v} \) and \( \boldsymbol{u} = \boldsymbol{T}\boldsymbol{S}^{-1}\boldsymbol{v} \). Then, considering the squared norm of the residual difference:
\begin{align}
\|\boldsymbol{e}_{L-s}\|^2 &- \|\boldsymbol{e}_{L}\|^2 = \|\boldsymbol{e}_{L-s}\|^2 - \|\boldsymbol{e}_{L-s} - \boldsymbol{z} + \boldsymbol{u}\|^2 \notag \\
&= \boldsymbol{e}_{L-s}^\top \boldsymbol{z} + \boldsymbol{z}^\top \boldsymbol{e}_{L-s} - \boldsymbol{z}^\top \boldsymbol{z} \notag \\
&\quad - \boldsymbol{e}_{L-s}^\top \boldsymbol{u} - \boldsymbol{u}^\top \boldsymbol{e}_{L-s} + \boldsymbol{z}^\top \boldsymbol{u} + \boldsymbol{u}^\top \boldsymbol{z} - \boldsymbol{u}^\top \boldsymbol{u}.
\end{align}
Next, we compute each principal term:
1. Since \( \boldsymbol{v} = \boldsymbol{H}_s^{\top} \boldsymbol{e}_{L-s} \), we have:
\begin{equation}
\begin{split}
\boldsymbol{e}_{L-s}^\top \boldsymbol{z} &= \boldsymbol{e}_{L-s}^\top \bigl(\boldsymbol{H}_s \boldsymbol{S}^{-1} \boldsymbol{v}\bigr) = \bigl(\boldsymbol{H}_s^\top \boldsymbol{e}_{L-s}\bigr)^\top \boldsymbol{S}^{-1} \boldsymbol{v} = \boldsymbol{v}^\top \boldsymbol{S}^{-1} \boldsymbol{v}.
\end{split}
\end{equation}
2. Similarly, for \( \boldsymbol{z}^\top \boldsymbol{e}_{L-s} \):
\begin{equation}
\boldsymbol{z}^\top \boldsymbol{e}_{L-s} = \boldsymbol{v}^\top \boldsymbol{S}^{-1} \boldsymbol{v}.
\end{equation}
3. For \( \boldsymbol{z}^\top \boldsymbol{z} \):
\begin{equation}
\boldsymbol{z}^\top \boldsymbol{z} = \boldsymbol{v}^\top \boldsymbol{S}^{-1} \bigl(\boldsymbol{H}_s^\top \boldsymbol{H}_s\bigr) \boldsymbol{S}^{-1} \boldsymbol{v}.
\end{equation}
Thus, we obtain:
\begin{equation}
\label{eq:residual_decomp}
\begin{split}
\|\boldsymbol{e}_{L-s}\|^2 - \|\boldsymbol{e}_{L}\|^2 &= 2 \boldsymbol{v}^{\top} \boldsymbol{S}^{-1} \boldsymbol{v} - \boldsymbol{v}^{\top} \Bigl( \boldsymbol{S}^{-1} \boldsymbol{H}_{s}^{\top} \boldsymbol{H}_{s} \boldsymbol{S}^{-1} \Bigr) \boldsymbol{v} + \mathcal{R}_{L},
\end{split}
\end{equation}
where
\begin{equation}
\label{eq:coupling_term}
\mathcal{R}_{L}:=- \boldsymbol{e}_{L-s}^\top \boldsymbol{u} - \boldsymbol{u}^\top \boldsymbol{e}_{L-s} + \boldsymbol{z}^\top \boldsymbol{u} + \boldsymbol{u}^\top \boldsymbol{z} - \boldsymbol{u}^\top \boldsymbol{u}.
\end{equation}
The term \(\mathcal{R}_{L}\) denotes the coupling terms introduced by the
re-adjustment of the previously selected block \(\boldsymbol{H}_{L-s}\) through
the Schur-complement correction \(\boldsymbol{T}\boldsymbol{S}^{-1}\boldsymbol{v}\);
it is absent in the idealized decoupled case.
Since \(\mathcal{R}_{L}\ge 0\) by assumption, Eq.~(\ref{eq:residual_decomp}) and condition~(\ref{eq:core_sm}) imply:
\begin{equation}
\| \boldsymbol{e}_{L} \|^{2} \le r \, \| \boldsymbol{e}_{L-s} \|^{2}.
\end{equation}
Let \(\boldsymbol{e}_{0}\) denote the residual before the first accepted block in this construction. Since each accepted expansion appends \(s\) hidden units, after \(k\) accepted block expansions we have \(L=ks\), and therefore
\begin{equation}
\| \boldsymbol{e}_{L} \|^{2} \le r^{k} \, \| \boldsymbol{e}_{0} \|^{2}.
\end{equation}
Since \(0<r<1\), \(r^{k}\to 0\) as \(k\to\infty\), and hence
\(\lim_{L\to\infty}\|\boldsymbol{e}_L\|=0\).
This completes the proof.
\end{proof}

\clearpage
\section{Explanation of the MGSM-driven RPL Modeling Process}
\label{a_MGSM_process}

\paragraph{Process overview.}

Fig.~\ref{process} illustrates the overall MGSM-driven RPL modeling process.
We refer to each randomly parameterized hidden unit $(\boldsymbol{w}, b)$ in the RPL as a random basis; its activation $g(\boldsymbol{w}^\top \boldsymbol{z} + b)$ on a PTM feature vector $\boldsymbol{z}$ contributes one column to the RPL feature matrix.
These random bases are continually sampled from simple distributions at each iteration, but only those whose induced features satisfy the MGSM acceptance criterion are incorporated into the final RPL. The goal is to ensure that the resulting random features lie as much as possible within the domain of the downstream task, for which the initial-task dataset provides the best available reference at the current stage. Consequently, MGSM can be viewed as a data- and objective-driven exploration procedure: when no candidate random bases drawn from the current sampling distribution are accepted, MGSM automatically adjusts the distribution range so that more sampled bases fall into the admissible region defined by its criterion. This process continues until the residual falls below a predefined threshold, at which point the RPL construction terminates.

Although the acceptance criterion in Theorem~\ref{sm} does not formally guarantee that a candidate block will be accepted within a finite number of trials, in all of our experiments across four benchmarks, multiple PTM architectures, and three independent runs each, at least one candidate block was always accepted before the discrete scaling set $\mathcal{X}_{\xi}$ was exhausted. As $\xi$ increases, the sampled random features span increasingly diverse directions, making it progressively easier for a candidate batch to satisfy the criterion.

\begin{figure*}[h]
  \vskip -0.2in
	\begin{center}\centerline{\includegraphics[width=0.8\textwidth]{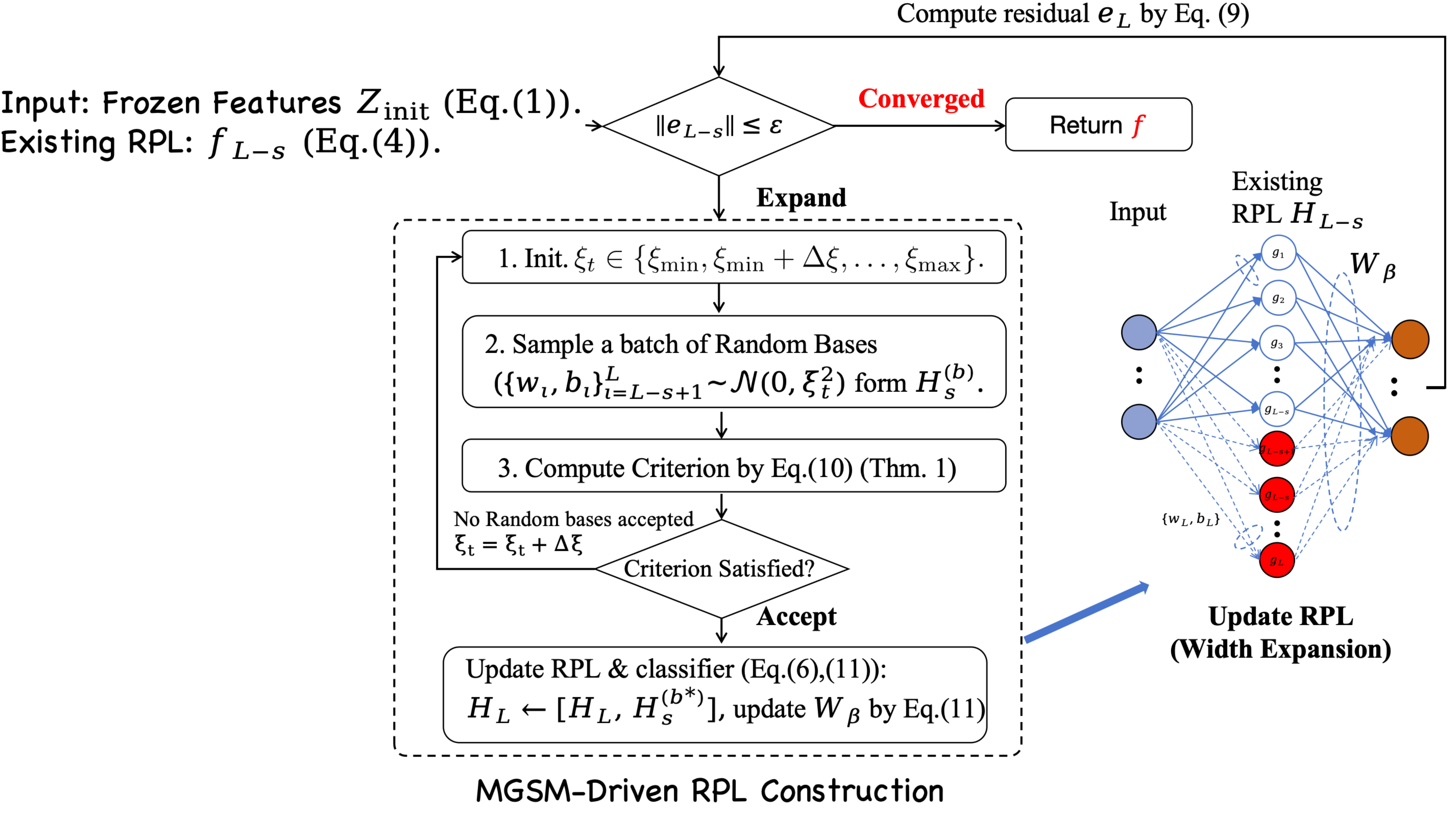}}
		\caption{\textbf{Illustration of the MGSM-driven RPL construction process.}}
		\label{process}
	\end{center}
\end{figure*}

\clearpage
\section{Additional Experiment}
\label{a_details}

\subsection{Hyperparameters}
\label{appendix_hyper}

\noindent\textbf{Our Method.}
The proposed SCL-MGSM involves the following hyperparameters, all of which are shared across the four benchmarks.
The activation function $g(\cdot)$ is sigmoid.
The batch size $s{=}50$ and maximum number of candidate batches $B_{\max}{=}10$ mainly affect the construction speed (see Table~\ref{tab:s}).
The contraction rate in Theorem~\ref{sm} is fixed at $r{=}0.99$.
The residual tolerance is $\varepsilon{=}0.01$.
The ridge regularization coefficient $\lambda{=}0.01$ is selected by grid search over $\{0.001, 0.01, 0.1, 10, 100, 1000\}$ with a 90\%-10\% train-validation split on the initial-task data.
The adaptive scaling range is $\xi_{\min}{=}0.0008$, $\Delta\xi{=}0.0001$, $\xi_{\max}{=}0.004$.
For first-session adaptation (FSA), we follow the same protocol as RanPAC~\cite{mcdonnell2024ranpac}.

\noindent\textbf{Baselines.}
All compared methods are reproduced using their official codebases.
For KLDA, APT, CodaPrompt, and SD-LoRA, we follow the ImageNet-R hyperparameter settings.
Methods using first-session adaptation, including LoRanPAC, adopt the same hyperparameters as RanPAC~\cite{mcdonnell2024ranpac}.
All other methods use their default hyperparameters.
All experiments are conducted on a single Nvidia A100 GPU with 80GB memory.

\subsection{Final Hidden Size of SCL-MGSM}
\label{appendix_hs}
Unlike fixed-size RPL methods that require manual tuning of the hidden size, SCL-MGSM determines $L^*$ automatically via the MGSM residual criterion. Table~\ref{tab:final_nodes} reports the average final hidden size over three runs. The results show that $L^*$ varies across datasets, reflecting differences in domain shift: ImageNet-A, which exhibits the largest domain gap from the pre-trained distribution, requires the most hidden units ($L^*{=}15950$), whereas ImageNet-R converges with the fewest ($L^*{=}14650$). This data-driven capacity allocation enhances usability by automating the selection of hidden sizes for different tasks, obviating the need for manual searching.

\begin{table}[H]
\centering
\caption{\textbf{Average final hidden sizes ($L^*$) of SCL-MGSM across datasets.}}
\label{tab:final_nodes}
\scriptsize
\begin{sc}
\begin{tabular}{lcc}
\toprule
\textbf{Dataset} & \textbf{Avg. Final Nodes ($L^*$)} & \textbf{Std. Dev.} \\
\midrule

ImageNet-R & 14650 & ± 106 \\
ImageNet-A  & 15950 & ± 313 \\
ObjectNet   & 15250 & ± 176 \\
OmniBenchmark & 15450 & ± 248 \\
\bottomrule
\end{tabular}
\end{sc}
\end{table}

\subsection{Impact of First-Session Adaptation on RPL-based Methods}
\label{appendix_fsa_impact}
To isolate the effect of first-session adaptation (FSA), Fig.~\ref{fig:fsa_domain_gap} compares the gain in $A_{\text{last}}$ over the corresponding \textsc{Joint Linear Probe} baseline for representative RPL-based methods under matched evaluation settings. Dark bars denote the variants with FSA, and light bars denote the counterparts without FSA. Positive values indicate that the incremental method surpasses the frozen-backbone oracle that jointly trains only the linear head on all tasks.

Across most datasets, enabling FSA enlarges the margin over \textsc{Joint Linear Probe} for RPL-based methods, indicating that adapting the PTM on the initial task yields representations that better capture the statistical structure of downstream classes, thereby mitigating the domain gap between pre-training and target distributions. Furthermore, even without FSA, SCL-MGSM consistently outperforms the other RPL-based baselines, suggesting that MGSM-guided subspace modeling itself can partially compensate for the domain gap by leveraging the initial-task statistics to construct a more representative RPL. When FSA is enabled, SCL-MGSM benefits consistently on all four datasets and achieves the largest gains among the compared methods, indicating that the adapted first-session representation and MGSM-guided RPL construction are complementary and can be combined for stronger continual performance.

\begin{figure}[H]
    \centering
    \includegraphics[width=0.96\textwidth]{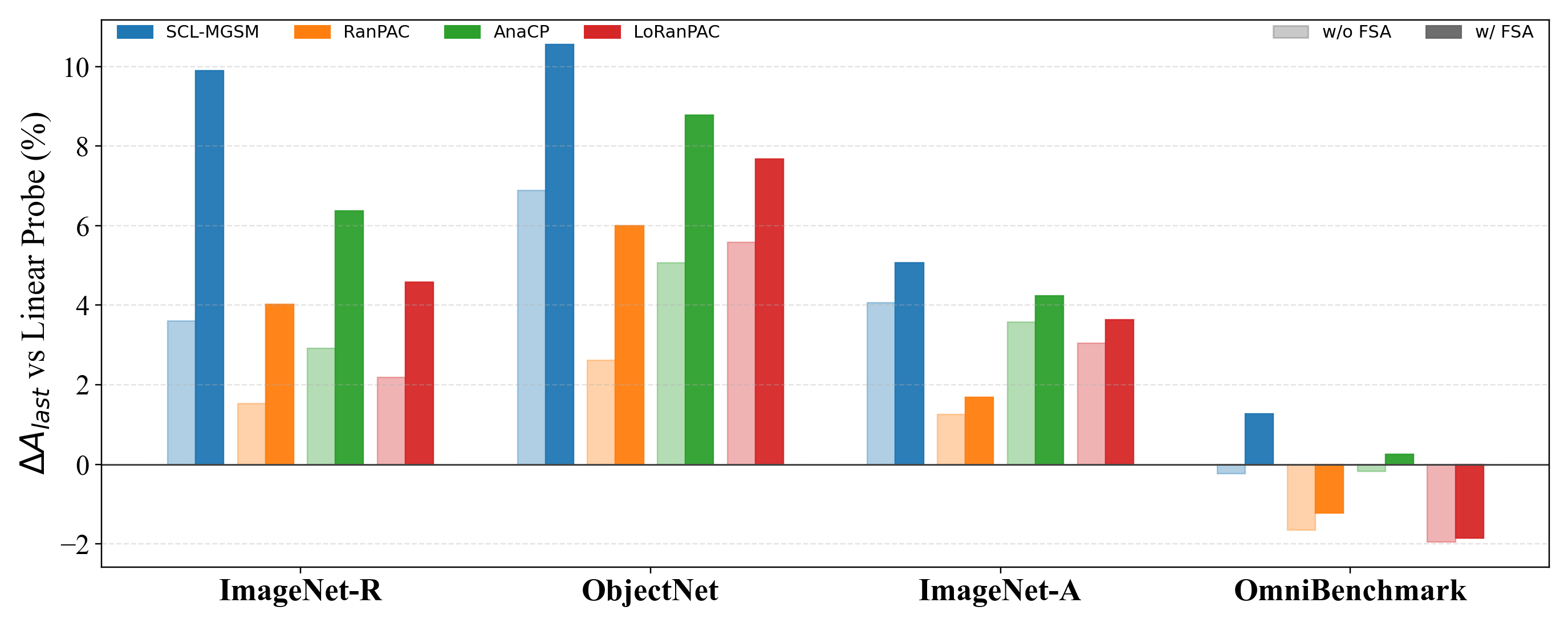}
    \caption{\textbf{Impact of first-session adaptation on representative RPL-based methods.} We report $\Delta A_{\text{last}}$ relative to the corresponding \textsc{Joint Linear Probe} baseline on four datasets. Dark bars indicate variants with FSA, and light bars indicate variants without FSA. Larger positive values mean that the incremental learner exceeds the frozen-backbone joint linear probe by a wider margin.}
    \label{fig:fsa_domain_gap}
\end{figure}

\subsection{Result on More Backbones}
\label{appendix_morebackbone}

\paragraph{Result on DINO-v2.}
\label{appendix_dinov2}
As shown in Table~\ref{tab:dinov2}, with DINO-v2~\cite{oquab2023dinov2} as the PTM, SCL-MGSM achieves the best $A_{\text{last}}$ and $A_{\text{avg}}$ on ImageNet-R, OmniBenchmark, and ObjectNet under both protocols, and attains the highest $A_{\text{last}}$ on ImageNet-A under both protocols. Specifically, on OmniBenchmark Inc-5, SCL-MGSM surpasses the second-best method (AnaCP) by +2.08\% in $A_{\text{last}}$, and on ImageNet-R Inc-10 the margin reaches +0.46\% over AnaCP. The only exception is ImageNet-A $A_{\text{avg}}$ Inc-5, where AnaCP leads by a small margin. Notably, on OmniBenchmark SCL-MGSM surpasses the \textsc{Joint Linear Probe} oracle (83.40\% vs.\ 82.41\%), despite operating in a strictly incremental setting. Relative to the corresponding results in Table~\ref{tab:all_results} obtained with ViT-B/16-IN21K, DINO-v2 consistently improves all compared methods. SCL-MGSM nevertheless retains its advantage over the other RPL-based baselines, indicating that MGSM-guided subspace modeling continues to benefit from stronger feature representations rather than saturating as the backbone improves.

\paragraph{Result on non-Transformer Backbone.}
As shown in Table~\ref{tab:cnn}, when replacing the Transformer backbone with ResNet-101, SCL-MGSM ranks first on ImageNet-R, ObjectNet, and OmniBenchmark across both incremental protocols. Specifically, on OmniBenchmark Inc-5, SCL-MGSM outperforms the second-best method (RanPAC) by +3.00\% in $A_{\text{last}}$ and +2.61\% in $A_{\text{avg}}$. The exception is ImageNet-A, where LoRanPAC achieves the best results and SCL-MGSM ranks second. A possible reason is that ImageNet-A contains adversarially filtered samples that are inherently challenging for CNN features, limiting the benefit that subspace modeling can extract from relatively weak representations. Nevertheless, across the remaining three datasets SCL-MGSM consistently leads, demonstrating that MGSM-guided RPL construction is architecture-agnostic and not restricted to Transformer-based PTMs. These results confirm that SCL-MGSM serves as a plug-and-play module compatible with diverse PTM architectures.

\begin{table*}[t]
\caption{\textbf{Comparison of CIL performance across methods with DINO-v2 as the PTM.} B-$m$ Inc-$n$ denotes $m$ initial classes and $n$ incremental classes (\textit{$m{=}0$}: equal split). $A_{\text{last}}$ and $A_{\text{avg}}$ denote final and average accuracy (\%), respectively. \textbf{Bold} and \underline{underline} mark the best and second-best incremental results under each protocol.}
\label{tab:dinov2}
\centering
\setlength{\tabcolsep}{2pt}
\scriptsize
\resizebox{\textwidth}{!}{
\begin{tabular}{l*{8}{c}}
\toprule
\textbf{Method} &
\multicolumn{4}{c}{\textbf{\makecell{ImageNet-R  (B-0)}}} &
\multicolumn{4}{c}{\textbf{\makecell{ImageNet-A  (B-0)}}} \\
\cmidrule(lr){2-5} \cmidrule(lr){6-9}
Joint Linear Probe &
\multicolumn{4}{c}{82.47$\pm$0.07} &
\multicolumn{4}{c}{69.67$\pm$0.37} \\
\cmidrule(lr){1-9}
&
\multicolumn{2}{c}{\textbf{Inc-5}} &
\multicolumn{2}{c}{\textbf{Inc-10}} &
\multicolumn{2}{c}{\textbf{Inc-5}} &
\multicolumn{2}{c}{\textbf{Inc-10}} \\
\cmidrule(lr){2-3} \cmidrule(lr){4-5} \cmidrule(lr){6-7} \cmidrule(lr){8-9}
&
\textbf{$A_{\text{last}}$} & \textbf{$A_{\text{avg}}$} &
\textbf{$A_{\text{last}}$} & \textbf{$A_{\text{avg}}$} &
\textbf{$A_{\text{last}}$} & \textbf{$A_{\text{avg}}$} &
\textbf{$A_{\text{last}}$} & \textbf{$A_{\text{avg}}$} \\
\midrule
SimpleCIL &
75.35$\pm$0.00 & 80.82$\pm$0.94 & 75.35$\pm$0.00 & 80.52$\pm$0.91 &
68.93$\pm$0.00 & 77.58$\pm$1.13 & 68.93$\pm$0.00 & 77.13$\pm$1.07 \\
KLDA &
75.78$\pm$0.17 & 80.76$\pm$1.15 & 75.78$\pm$0.16 & 80.79$\pm$0.78 &
66.91$\pm$0.63 & 70.13$\pm$1.76 & 66.91$\pm$0.63 & 70.15$\pm$1.49 \\
LoRanPAC &
84.06$\pm$0.10 & 88.34$\pm$0.55 & 84.97$\pm$0.21 & 89.11$\pm$0.55 &
66.69$\pm$0.65 & 73.60$\pm$1.08 & 69.23$\pm$0.44 & 75.28$\pm$1.12 \\
AnaCP &
\underline{84.41$\pm$0.40} & \underline{88.93$\pm$0.48} & \underline{85.94$\pm$0.33} & \underline{89.82$\pm$0.42} &
\underline{71.05$\pm$0.72} & \textbf{79.45$\pm$0.72} & \underline{71.98$\pm$0.68} & \underline{78.28$\pm$1.18} \\
RanPAC &
83.67$\pm$0.25 & 87.82$\pm$0.49 & 84.56$\pm$0.41 & 88.46$\pm$0.69 &
48.56$\pm$2.56 & 57.55$\pm$4.60 & 64.65$\pm$2.23 & 70.98$\pm$1.28 \\
G-ACIL &
81.53$\pm$0.19 & 86.33$\pm$0.71 & 81.53$\pm$0.19 & 86.12$\pm$0.67 &
67.61$\pm$0.58 & 75.91$\pm$0.88 & 67.63$\pm$0.60 & 75.45$\pm$0.80 \\
SCL-MGSM &
\textbf{84.61$\pm$0.79} & \textbf{89.26$\pm$0.17} & \textbf{86.40$\pm$0.16} & \textbf{90.18$\pm$0.45} &
\textbf{71.60$\pm$0.42} & \underline{78.66$\pm$0.90} & \textbf{72.42$\pm$0.52} & \textbf{79.30$\pm$1.01} \\
\midrule
\addlinespace[0.3em]
\textbf{Method} &
\multicolumn{4}{c}{\textbf{\makecell{ObjectNet  (B-0)}}} &
\multicolumn{4}{c}{\textbf{\makecell{OmniBenchmark  (B-0)}}} \\
\cmidrule(lr){2-5} \cmidrule(lr){6-9}
Joint Linear Probe &
\multicolumn{4}{c}{65.17$\pm$0.28} &
\multicolumn{4}{c}{82.41$\pm$0.05} \\
\cmidrule(lr){1-9}
&
\multicolumn{2}{c}{\textbf{Inc-5}} &
\multicolumn{2}{c}{\textbf{Inc-10}} &
\multicolumn{2}{c}{\textbf{Inc-5}} &
\multicolumn{2}{c}{\textbf{Inc-10}} \\
\cmidrule(lr){2-3} \cmidrule(lr){4-5} \cmidrule(lr){6-7} \cmidrule(lr){8-9}
&
\textbf{$A_{\text{last}}$} & \textbf{$A_{\text{avg}}$} &
\textbf{$A_{\text{last}}$} & \textbf{$A_{\text{avg}}$} &
\textbf{$A_{\text{last}}$} & \textbf{$A_{\text{avg}}$} &
\textbf{$A_{\text{last}}$} & \textbf{$A_{\text{avg}}$} \\
\midrule
SimpleCIL &
56.13$\pm$0.00 & 66.33$\pm$2.55 & 56.13$\pm$0.00 & 65.84$\pm$2.43 &
73.48$\pm$0.00 & 81.02$\pm$1.06 & 73.48$\pm$0.00 & 80.79$\pm$1.04 \\
KLDA &
59.99$\pm$0.19 & 69.66$\pm$0.68 & 59.99$\pm$0.18 & 67.91$\pm$1.67 &
77.50$\pm$0.14 & 86.29$\pm$1.40 & 77.51$\pm$0.12 & 85.03$\pm$1.23 \\
LoRanPAC &
67.82$\pm$0.26 & 77.04$\pm$1.91 & 69.72$\pm$0.15 & 78.25$\pm$1.82 &
78.62$\pm$0.24 & 86.64$\pm$0.58 & 79.22$\pm$0.41 & 86.83$\pm$0.56 \\
    AnaCP &
    \underline{71.61$\pm$0.22} & \underline{78.74$\pm$2.10} & \underline{71.76$\pm$0.63} & \underline{79.56$\pm$1.49} &
\underline{81.32$\pm$0.27} & \underline{88.49$\pm$0.55} & \underline{81.46$\pm$0.41} & \underline{88.42$\pm$0.56} \\
RanPAC &
66.95$\pm$0.44 & 75.78$\pm$2.34 & 68.78$\pm$0.46 & 77.10$\pm$1.71 &
79.80$\pm$0.17 & 87.21$\pm$0.70 & 79.96$\pm$0.27 & 87.08$\pm$0.52 \\
G-ACIL &
63.58$\pm$0.19 & 73.20$\pm$2.22 & 63.58$\pm$0.18 & 72.77$\pm$2.15 &
80.14$\pm$0.19 & 87.58$\pm$0.44 & 80.14$\pm$0.19 & 87.39$\pm$0.45 \\
    SCL-MGSM &
    \textbf{72.04$\pm$1.32} & \textbf{80.34$\pm$1.96} & \textbf{72.50$\pm$0.59} & \textbf{80.23$\pm$1.76} &
\textbf{83.40$\pm$0.10} & \textbf{89.38$\pm$0.71} & \textbf{83.43$\pm$0.20} & \textbf{89.34$\pm$0.81} \\
\bottomrule
\end{tabular}}
\vskip -0.05in
\end{table*}

\begin{table*}[t]
\caption{\textbf{Comparison of CIL performance across methods with ResNet101 as the PTM.} B-$m$ Inc-$n$ denotes $m$ initial classes and $n$ incremental classes (\textit{$m{=}0$}: equal split). $A_{\text{last}}$ and $A_{\text{avg}}$ denote final and average accuracy (\%), respectively. \textbf{Bold} and \underline{underline} mark the best and second-best results under each protocol.}
\label{tab:cnn}
\centering
\setlength{\tabcolsep}{2pt}
\scriptsize
\resizebox{\textwidth}{!}{
\begin{tabular}{l*{8}{c}}
\toprule
\textbf{Method} &
\multicolumn{4}{c}{\textbf{\makecell{ImageNet-R  (B-0)}}} &
\multicolumn{4}{c}{\textbf{\makecell{ImageNet-A  (B-0)}}} \\
\cmidrule(lr){2-5} \cmidrule(lr){6-9}
&
\multicolumn{2}{c}{\textbf{Inc-5}} &
\multicolumn{2}{c}{\textbf{Inc-10}} &
\multicolumn{2}{c}{\textbf{Inc-5}} &
\multicolumn{2}{c}{\textbf{Inc-10}} \\
\cmidrule(lr){2-3} \cmidrule(lr){4-5} \cmidrule(lr){6-7} \cmidrule(lr){8-9}
&
\textbf{$A_{\text{last}}$} & \textbf{$A_{\text{avg}}$} &
\textbf{$A_{\text{last}}$} & \textbf{$A_{\text{avg}}$} &
\textbf{$A_{\text{last}}$} & \textbf{$A_{\text{avg}}$} &
\textbf{$A_{\text{last}}$} & \textbf{$A_{\text{avg}}$} \\
\midrule
SimpleCIL &
37.28$\pm$0.00 & 45.72$\pm$1.24 & 37.28$\pm$0.00 & 45.13$\pm$1.07 &
13.63$\pm$0.00 & 23.89$\pm$0.11 & 13.63$\pm$0.00 & 23.14$\pm$0.33 \\
KLDA &
48.96$\pm$0.18 & 55.82$\pm$0.99 & 48.89$\pm$0.40 & 55.08$\pm$1.21 &
15.87$\pm$0.40 & 21.75$\pm$0.62 & 15.80$\pm$0.37 & 20.99$\pm$1.40 \\
LoRanPAC &
\underline{53.38$\pm$0.21} & \underline{62.18$\pm$0.84} & \underline{53.48$\pm$0.25} & \underline{61.75$\pm$0.68} &
\textbf{22.03$\pm$0.45} & \textbf{30.81$\pm$1.01} & \textbf{21.94$\pm$0.60} & \textbf{30.15$\pm$1.24} \\
AnaCP &
52.98$\pm$0.41 & 61.63$\pm$1.00 & 53.03$\pm$0.43 & 61.19$\pm$0.39 &
17.86$\pm$0.17 & 28.49$\pm$0.73 & 17.95$\pm$1.16 & 27.65$\pm$0.89 \\
RanPAC &
52.46$\pm$0.12 & 60.84$\pm$0.61 & 52.87$\pm$0.32 & 60.54$\pm$0.77 &
19.84$\pm$0.55 & 29.00$\pm$1.31 & 19.62$\pm$0.27 & 29.08$\pm$0.71 \\
G-ACIL &
49.38$\pm$0.06 & 58.50$\pm$0.42 & 49.38$\pm$0.07 & 57.99$\pm$0.38 &
20.03$\pm$0.37 & 30.03$\pm$0.71 & 20.03$\pm$0.37 & 29.24$\pm$0.75 \\
SCL-MGSM &
\textbf{54.19$\pm$0.84} & \textbf{62.34$\pm$0.50} & \textbf{54.56$\pm$0.30} & \textbf{62.11$\pm$0.61} &
\underline{20.23$\pm$0.27} & \underline{30.27$\pm$0.63} & \underline{20.08$\pm$0.24} & \underline{29.41$\pm$0.63} \\
\midrule
\addlinespace[0.3em]
\textbf{Method} &
\multicolumn{4}{c}{\textbf{\makecell{ObjectNet  (B-0)}}} &
\multicolumn{4}{c}{\textbf{\makecell{OmniBenchmark  (B-0)}}} \\
\cmidrule(lr){2-5} \cmidrule(lr){6-9}
&
\multicolumn{2}{c}{\textbf{Inc-5}} &
\multicolumn{2}{c}{\textbf{Inc-10}} &
\multicolumn{2}{c}{\textbf{Inc-5}} &
\multicolumn{2}{c}{\textbf{Inc-10}} \\
\cmidrule(lr){2-3} \cmidrule(lr){4-5} \cmidrule(lr){6-7} \cmidrule(lr){8-9}
&
\textbf{$A_{\text{last}}$} & \textbf{$A_{\text{avg}}$} &
\textbf{$A_{\text{last}}$} & \textbf{$A_{\text{avg}}$} &
\textbf{$A_{\text{last}}$} & \textbf{$A_{\text{avg}}$} &
\textbf{$A_{\text{last}}$} & \textbf{$A_{\text{avg}}$} \\
\midrule
SimpleCIL &
29.26$\pm$0.00 & 40.07$\pm$3.17 & 29.25$\pm$0.02 & 39.45$\pm$3.05 &
48.44$\pm$0.00 & 60.14$\pm$1.18 & 48.44$\pm$0.00 & 59.66$\pm$1.26 \\
KLDA &
33.42$\pm$0.34 & 42.14$\pm$3.38 & 33.26$\pm$0.13 & 41.84$\pm$2.96 &
55.84$\pm$0.21 & 67.61$\pm$1.03 & 55.97$\pm$0.29 & 67.17$\pm$1.21 \\
LoRanPAC &
\underline{38.34$\pm$0.31} & \underline{50.40$\pm$2.44} & \underline{38.47$\pm$0.32} & \underline{49.81$\pm$2.33} &
58.86$\pm$0.21 & \underline{71.65$\pm$1.08} & 58.90$\pm$0.19 & 71.28$\pm$1.01 \\
AnaCP &
34.69$\pm$0.51 & 46.59$\pm$3.25 & 34.49$\pm$0.59 & 45.96$\pm$2.62 &
59.18$\pm$0.19 & 70.86$\pm$0.95 & 59.39$\pm$0.20 & 70.58$\pm$1.15 \\
RanPAC &
37.41$\pm$0.16 & 48.72$\pm$2.70 & 37.64$\pm$0.25 & 47.99$\pm$2.61 &
\underline{60.38$\pm$0.08} & 71.48$\pm$0.99 & \underline{60.19$\pm$0.44} & \underline{71.32$\pm$0.80} \\
G-ACIL &
35.06$\pm$0.30 & 46.91$\pm$2.32 & 35.07$\pm$0.31 & 46.33$\pm$2.23 &
57.24$\pm$0.09 & 69.09$\pm$0.96 & 57.24$\pm$0.08 & 68.71$\pm$0.99 \\
SCL-MGSM &
\textbf{38.84$\pm$0.75} & \textbf{51.02$\pm$2.59} & \textbf{39.15$\pm$0.11} & \textbf{50.67$\pm$2.37} &
\textbf{63.38$\pm$0.20} & \textbf{74.09$\pm$0.94} & \textbf{63.22$\pm$0.05} & \textbf{73.78$\pm$0.90} \\
\bottomrule
\end{tabular}}
\vskip -0.05in
\end{table*}

\begin{table*}[t!]
\caption{\textbf{Comparison of Average Forgetting.} We report \(F_{\text{avg}}\) (in \%) for representative incremental methods; lower is better. \textbf{Bold} and \underline{underline} mark the lowest and second-lowest forgetting under each protocol.}
\label{tab:forgetting}
\centering
\setlength{\tabcolsep}{8pt}
\scriptsize
\begin{tabular}{l*{4}{c}}
\toprule
\textbf{Method} &
\multicolumn{2}{c}{\textbf{\makecell{ImageNet-R  (B-0)}}} &
\multicolumn{2}{c}{\textbf{\makecell{ImageNet-A  (B-0)}}} \\
\cmidrule(lr){2-3} \cmidrule(lr){4-5}
&
\textbf{Inc-5} & \textbf{Inc-10} &
\textbf{Inc-5} & \textbf{Inc-10} \\
\midrule

SimpleCIL & 8.06$_{\scriptscriptstyle\pm0.46}$ & 7.73$_{\scriptscriptstyle\pm0.26}$ & 11.75$_{\scriptscriptstyle\pm0.51}$ & 11.59$_{\scriptscriptstyle\pm0.33}$ \\

KLDA & 6.89$_{\scriptscriptstyle\pm0.75}$ & 6.52$_{\scriptscriptstyle\pm0.65}$ & \underline{10.15$_{\scriptscriptstyle\pm0.49}$} & 11.67$_{\scriptscriptstyle\pm0.21}$ \\
RanPAC & 7.11$_{\scriptscriptstyle\pm0.49}$ & 6.34$_{\scriptscriptstyle\pm0.38}$ & 24.83$_{\scriptscriptstyle\pm6.31}$ & 10.53$_{\scriptscriptstyle\pm0.44}$ \\
G-ACIL & 7.63$_{\scriptscriptstyle\pm0.35}$ & 7.21$_{\scriptscriptstyle\pm0.43}$ & 13.41$_{\scriptscriptstyle\pm0.27}$ & 12.36$_{\scriptscriptstyle\pm0.07}$ \\
LoRanPAC & \underline{6.66$_{\scriptscriptstyle\pm0.17}$} & \underline{6.03$_{\scriptscriptstyle\pm0.09}$} & 10.98$_{\scriptscriptstyle\pm0.32}$ & \underline{10.48$_{\scriptscriptstyle\pm0.52}$} \\
AnaCP & 7.01$_{\scriptscriptstyle\pm0.40}$ & 6.31$_{\scriptscriptstyle\pm0.29}$ & 12.12$_{\scriptscriptstyle\pm0.22}$ & 10.66$_{\scriptscriptstyle\pm0.56}$ \\
SCL-MGSM & \textbf{6.42$_{\scriptscriptstyle\pm0.52}$} & \textbf{5.31$_{\scriptscriptstyle\pm0.20}$} & \textbf{9.50$_{\scriptscriptstyle\pm0.59}$} & \textbf{9.90$_{\scriptscriptstyle\pm0.74}$} \\
\midrule
\addlinespace[0.3em]
\textbf{Method} &
\multicolumn{2}{c}{\textbf{\makecell{ObjectNet  (B-0)}}} &
\multicolumn{2}{c}{\textbf{\makecell{OmniBenchmark  (B-0)}}} \\
\cmidrule(lr){2-3} \cmidrule(lr){4-5}
&
\textbf{Inc-5} & \textbf{Inc-10} &
\textbf{Inc-5} & \textbf{Inc-10} \\
\midrule

SimpleCIL & 10.19$_{\scriptscriptstyle\pm0.22}$ & 9.97$_{\scriptscriptstyle\pm0.19}$ & \underline{8.15$_{\scriptscriptstyle\pm0.08}$} & \underline{8.04$_{\scriptscriptstyle\pm0.07}$} \\

KLDA & \underline{9.96$_{\scriptscriptstyle\pm0.23}$} & 9.62$_{\scriptscriptstyle\pm0.19}$ & 8.41$_{\scriptscriptstyle\pm0.18}$ & 8.25$_{\scriptscriptstyle\pm0.12}$ \\
RanPAC & 10.85$_{\scriptscriptstyle\pm0.67}$ & 10.06$_{\scriptscriptstyle\pm0.74}$ & 8.48$_{\scriptscriptstyle\pm0.18}$ & 8.19$_{\scriptscriptstyle\pm0.27}$ \\
G-ACIL & 11.55$_{\scriptscriptstyle\pm0.68}$ & 11.20$_{\scriptscriptstyle\pm0.71}$ & 8.84$_{\scriptscriptstyle\pm0.38}$ & 8.66$_{\scriptscriptstyle\pm0.32}$ \\
LoRanPAC & 10.25$_{\scriptscriptstyle\pm0.18}$ & 9.54$_{\scriptscriptstyle\pm0.40}$ & 8.29$_{\scriptscriptstyle\pm0.34}$ & 8.29$_{\scriptscriptstyle\pm0.22}$ \\
AnaCP & 10.25$_{\scriptscriptstyle\pm0.51}$ & \underline{9.41$_{\scriptscriptstyle\pm0.33}$} & 9.29$_{\scriptscriptstyle\pm0.43}$ & 8.69$_{\scriptscriptstyle\pm0.33}$ \\
SCL-MGSM & \textbf{9.19$_{\scriptscriptstyle\pm0.43}$} & \textbf{8.23$_{\scriptscriptstyle\pm0.71}$} & \textbf{8.01$_{\scriptscriptstyle\pm0.12}$} & \textbf{7.60$_{\scriptscriptstyle\pm0.57}$} \\
\bottomrule
\end{tabular}
\vskip -0.1in
\end{table*}

\subsection{Analysis of Average Forgetting.}
We define the average forgetting as \(F_{\text{avg}}=\frac{1}{T-1}\sum_{j=1}^{T-1}\bigl(\max_{t\in\{j,\dots,T-1\}} a_{t,j}-a_{T,j}\bigr)\), where \(T\) is the total number of stages and \(a_{t,j}\) denotes the test accuracy on task \(j\) after stage \(t\). A lower \(F_{\text{avg}}\) indicates better retention of previously learned knowledge.
Table~\ref{tab:forgetting} reports \(F_{\text{avg}}\) under the same settings as Table~\ref{tab:all_results} for representative RPL-based methods. SCL-MGSM achieves the lowest forgetting in all eight settings, and its margin over the second-lowest method is 0.24/0.72 on ImageNet-R, 0.65/0.58 on ImageNet-A, 0.77/1.18 on ObjectNet, and 0.14/0.44 on OmniBenchmark for Inc-5/Inc-10, respectively. Among the compared RPL-based methods (RanPAC, G-ACIL, LoRanPAC, AnaCP), SCL-MGSM consistently attains the lowest \(F_{\text{avg}}\), confirming that the RPL construction from MGSM preserves previously learned knowledge more effectively than random initialization while maintaining the strongest accuracy in Table~\ref{tab:all_results}. Figure~\ref{fig:seed2025_curves} further visualizes the stage-wise accuracy under the B-0 Inc-10 setting.

\begin{figure*}[t!]
    \centering
    \includegraphics[width=\linewidth]{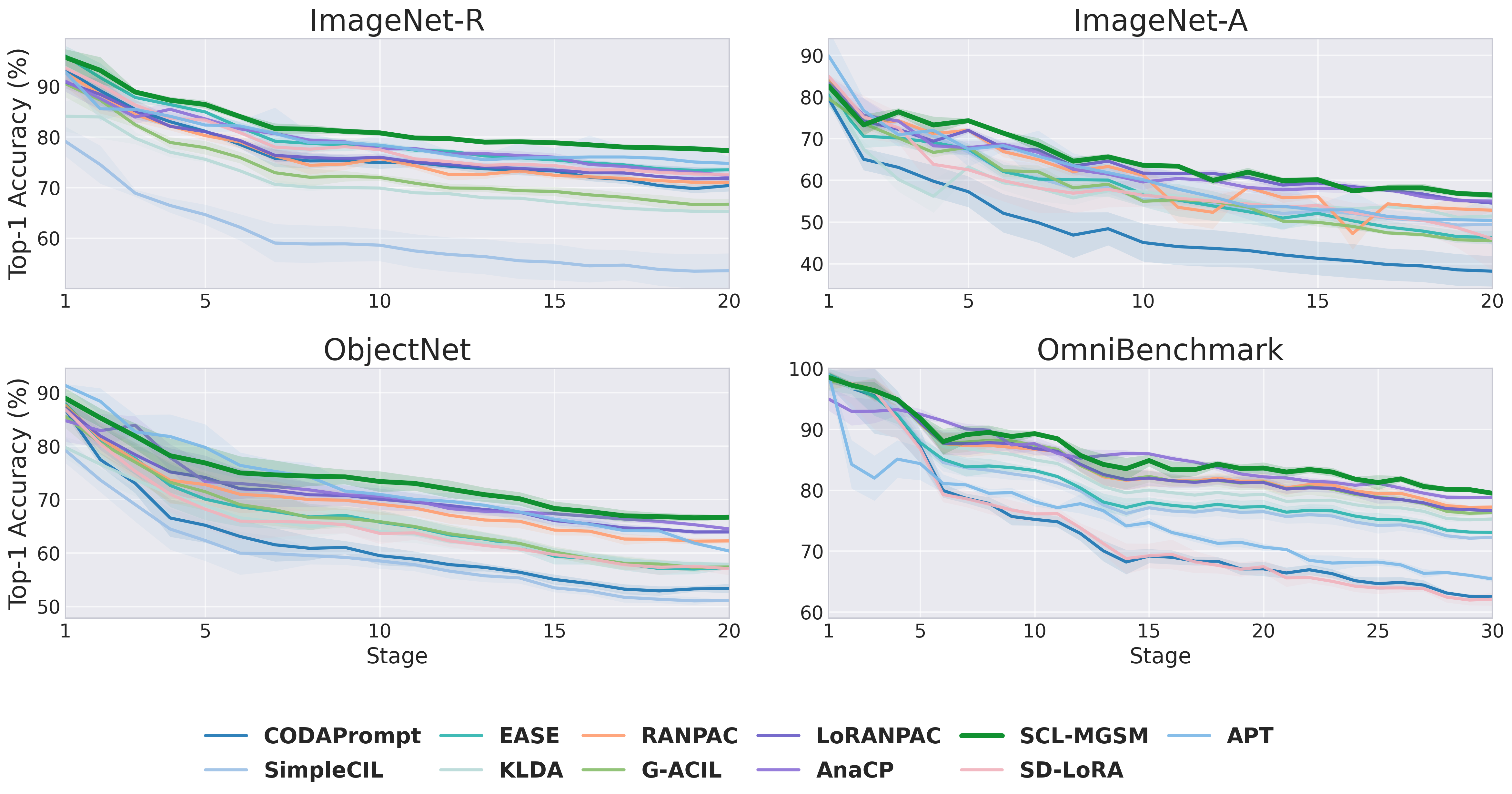}
    \caption{\textbf{Top-1 accuracy during sequential continual learning using ViT-B/16-IN21K.}}
    \label{fig:seed2025_curves}
\end{figure*}

\subsection[Eigenvalue Analysis of the Recursive Matrix Pt]{Eigenvalue Analysis of the Recursive Matrix $\boldsymbol{P}_t$}
\label{appendix_eigs}
As shown in Eq.~(\ref{eq:P_update_simple}), $\boldsymbol{P}_t$ accumulates the random-feature Gram matrices across stages. Its conditioning directly governs the numerical stability of the recursive ridge updates.
To further investigate, Figure~\ref{fig:eigs_cond} plots the maximum eigenvalue, minimum eigenvalue, and condition number of $\boldsymbol{P}_t$ on ImageNet-A across 20 incremental stages.
Under RI, the maximum eigenvalue increases rapidly while the minimum eigenvalue decays, causing the condition number to grow by roughly an order of magnitude.
Under MGSM, the maximum eigenvalue grows more slowly and the minimum eigenvalue remains stable, keeping the condition number consistently lower than that of RI.
These results further suggest that MGSM supports more stable continual learning by maintaining a better-conditioned $\boldsymbol{P}_t$, rather than depending primarily on stronger ridge regularization.

\begin{figure}[t!]
    \centering
    \includegraphics[width=0.98\linewidth]{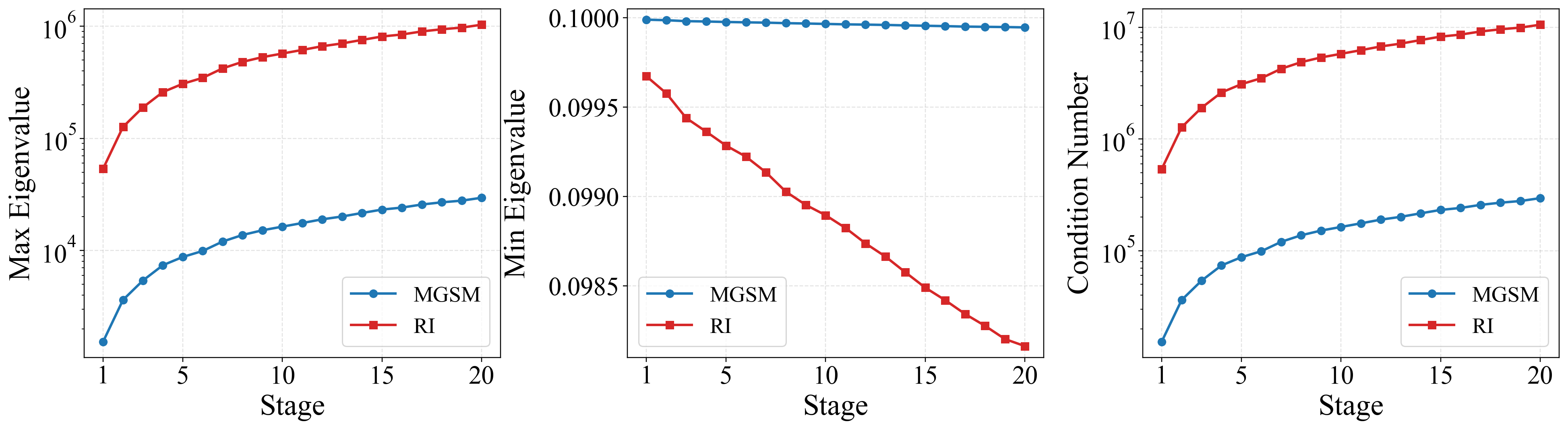}
    \caption{\textbf{Eigenvalue and condition number analysis of $\boldsymbol{P}_t$.} Left: maximum eigenvalue. Center: minimum eigenvalue. Right: condition number of $\boldsymbol{P}_t$.}
    \label{fig:eigs_cond}
\end{figure}

\section{Limitation and Future Work}

SCL-MGSM leverages initial-task data to guide RPL construction via the MGSM criterion, and the resulting RPL is then frozen throughout all subsequent continual learning stages. This is the first paradigm in RPL-based analytic continual learning that uses initial-task supervision to determine which random bases to retain, and it is well suited to scenarios where successive tasks share a certain degree of statistical similarity with the initial task. In practice, this assumption is commonly satisfied: image recognition tasks generally share low-level visual patterns such as edges, textures, and shapes, so even tasks from different domains exhibit non-trivial statistical overlap in the PTM feature space. Our experiments across seven benchmarks with varying domain gaps support this observation. Moreover, as PTMs are trained on increasingly large and diverse datasets, the learned representations become more general, further strengthening this cross-task similarity.

However, if the domain gap between the initial task and later tasks is extreme, the RPL constructed from initial-task statistics may not adequately span the feature directions required by subsequent tasks, potentially limiting representational capacity. In future work, we plan to extend SCL-MGSM toward a dynamic RPL paradigm that can progressively adapt the random projection space across continual learning stages, allowing the RPL to incorporate new task-relevant directions as they emerge while preserving the stability of previously learned representations.

%% file: main.bbl
\begin{thebibliography}{10}
\providecommand{\url}[1]{\texttt{#1}}
\providecommand{\urlprefix}{URL }
\providecommand{\doi}[1]{https://doi.org/#1}

\bibitem{barbu2019objectnet}
Barbu, A., Mayo, D., Alverio, J., Luo, W., Wang, C., Gutfreund, D., Tenenbaum, J., Katz, B.: Objectnet: A large-scale bias-controlled dataset for pushing the limits of object recognition models. NeurIPS  \textbf{32} (2019)

\bibitem{blum2012density}
Blum, K.: Density matrix theory and applications, vol.~64. Springer Science \& Business Media (2012)

\bibitem{chen2025achieving}
Chen, H., Wang, P., Zhou, Z., Zhang, X., Wu, Z., Jiang, Y.G.: Achieving more with less: Additive prompt tuning for rehearsal-free class-incremental learning. In: Proceedings of the IEEE/CVF International Conference on Computer Vision. pp. 340--349 (2025)

\bibitem{chen2022adaptformer}
Chen, S., Ge, C., Tong, Z., Wang, J., Song, Y., Wang, J., Luo, P.: Adaptformer: Adapting vision transformers for scalable visual recognition. Advances in Neural Information Processing Systems  \textbf{35},  16664--16678 (2022)

\bibitem{cover1965geometrical}
Cover, T.M.: Geometrical and statistical properties of systems of linear inequalities with applications in pattern recognition. IEEE transactions on electronic computers (3),  326--334 (1965)

\bibitem{french1999catastrophic}
French, R.M.: Catastrophic forgetting in connectionist networks. Trends in cognitive sciences  \textbf{3}(4),  128--135 (1999)

\bibitem{greville1960some}
Greville, T.: Some applications of the pseudoinverse of a matrix. SIAM review  \textbf{2}(1),  15--22 (1960)

\bibitem{hendrycks2021many}
Hendrycks, D., Basart, S., Mu, N., Kadavath, S., Wang, F., Dorundo, E., Desai, R., Zhu, T., Parajuli, S., Guo, M., et~al.: The many faces of robustness: A critical analysis of out-of-distribution generalization. In: Proceedings of the IEEE/CVF international conference on computer vision. pp. 8340--8349 (2021)

\bibitem{hendrycks2021natural}
Hendrycks, D., Zhao, K., Basart, S., Steinhardt, J., Song, D.: Natural adversarial examples. In: CVPR. pp. 15262--15271 (2021)

\bibitem{jia2022visual}
Jia, M., Tang, L., Chen, B.C., Cardie, C., Belongie, S., Hariharan, B., Lim, S.N.: Visual prompt tuning. In: European conference on computer vision. pp. 709--727. Springer (2022)

\bibitem{kirkpatrick2017overcoming}
Kirkpatrick, J., Pascanu, R., Rabinowitz, N., Veness, J., Desjardins, G., Rusu, A.A., Milan, K., Quan, J., Ramalho, T., Grabska-Barwinska, A., et~al.: Overcoming catastrophic forgetting in neural networks. Proceedings of the national academy of sciences  \textbf{114}(13),  3521--3526 (2017)

\bibitem{li2024harnessing}
Li, D., Wang, T., Chen, J., Dai, W., Zeng, Z.: Harnessing neural unit dynamics for effective and scalable class-incremental learning. arXiv preprint arXiv:2406.02428  (2024)

\bibitem{li2023crnet}
Li, D., Zeng, Z.: Crnet: A fast continual learning framework with random theory. IEEE Transactions on Pattern Analysis and Machine Intelligence  \textbf{45}(9),  10731--10744 (2023)

\bibitem{lian2022scaling}
Lian, D., Zhou, D., Feng, J., Wang, X.: Scaling \& shifting your features: A new baseline for efficient model tuning. Advances in Neural Information Processing Systems  \textbf{35},  109--123 (2022)

\bibitem{mccloskey1989catastrophic}
McCloskey, M., Cohen, N.J.: Catastrophic interference in connectionist networks: The sequential learning problem. In: Psychology of learning and motivation, vol.~24, pp. 109--165. Elsevier (1989)

\bibitem{mcdonnell2024ranpac}
McDonnell, M.D., Gong, D., Parvaneh, A., Abbasnejad, E., van~den Hengel, A.: Ranpac: Random projections and pre-trained models for continual learning. Advances in Neural Information Processing Systems  \textbf{36} (2024)

\bibitem{meng2025diffclass}
Meng, Z., Zhang, J., Yang, C., Zhan, Z., Zhao, P., Wang, Y.: Diffclass: Diffusion-based class incremental learning. In: European Conference on Computer Vision. pp. 142--159. Springer (2025)

\bibitem{momeni2025continual}
Momeni, S., Mazumder, S., Liu, B.: Continual learning using a kernel-based method over foundation models. In: Proceedings of the AAAI Conference on Artificial Intelligence. vol.~39, pp. 19528--19536 (2025)

\bibitem{momenianacp}
Momeni, S., Xiao, C., Liu, B.: Anacp: Toward upper-bound continual learning via analytic contrastive projection. In: The Thirty-ninth Annual Conference on Neural Information Processing Systems

\bibitem{oquab2023dinov2}
Oquab, M., Darcet, T., Moutakanni, T., Vo, H., Szafraniec, M., Khalidov, V., Fernandez, P., Haziza, D., Massa, F., El-Nouby, A., et~al.: Dinov2: Learning robust visual features without supervision. arXiv preprint arXiv:2304.07193  (2023)

\bibitem{panos2023first}
Panos, A., Kobe, Y., Reino, D.O., Aljundi, R., Turner, R.E.: First session adaptation: A strong replay-free baseline for class-incremental learning. In: Proceedings of the IEEE/CVF International Conference on Computer Vision. pp. 18820--18830 (2023)

\bibitem{peng2024loranpac}
Peng, L., Elenter, J., Agterberg, J., Ribeiro, A., Vidal, R.: Loranpac: Low-rank random features and pre-trained models for bridging theory and practice in continual learning. arXiv preprint arXiv:2410.00645  (2024)

\bibitem{peng2023ideal}
Peng, L., Giampouras, P., Vidal, R.: The ideal continual learner: An agent that never forgets. In: International Conference on Machine Learning. pp. 27585--27610. PMLR (2023)

\bibitem{qiao2023prompt}
Qiao, J., Tan, X., Chen, C., Qu, Y., Peng, Y., Xie, Y., et~al.: Prompt gradient projection for continual learning. In: The Twelfth International Conference on Learning Representations (2023)

\bibitem{smith2023coda}
Smith, J.S., Karlinsky, L., Gutta, V., Cascante-Bonilla, P., Kim, D., Arbelle, A., Panda, R., Feris, R., Kira, Z.: Coda-prompt: Continual decomposed attention-based prompting for rehearsal-free continual learning. In: Proceedings of the IEEE/CVF Conference on Computer Vision and Pattern Recognition. pp. 11909--11919 (2023)

\bibitem{sun2024mos}
Sun, H.L., Zhou, D.W., Zhao, H., Gan, L., Zhan, D.C., Ye, H.J.: Mos: Model surgery for pre-trained model-based class-incremental learning. arXiv preprint arXiv:2412.09441  (2024)

\bibitem{tylavsky2005generalization}
Tylavsky, D.J., Sohie, G.R.: Generalization of the matrix inversion lemma. Proceedings of the IEEE  \textbf{74}(7),  1050--1052 (2005)

\bibitem{wang2017stochastic}
Wang, D., Li, M.: Stochastic configuration networks: Fundamentals and algorithms. IEEE transactions on cybernetics  \textbf{47}(10),  3466--3479 (2017)

\bibitem{wang2022learning}
Wang, Z., Zhang, Z., Lee, C.Y., Zhang, H., Sun, R., Ren, X., Su, G., Perot, V., Dy, J., Pfister, T.: Learning to prompt for continual learning. In: Proceedings of the IEEE/CVF conference on computer vision and pattern recognition. pp. 139--149 (2022)

\bibitem{rw2019timm}
Wightman, R.: Pytorch image models. \url{https://github.com/rwightman/pytorch-image-models} (2019). \doi{10.5281/zenodo.4414861}

\bibitem{wu2025sd}
Wu, Y., Piao, H., Huang, L.K., Wang, R., Li, W., Pfister, H., Meng, D., Ma, K., Wei, Y.: Sd-lora: Scalable decoupled low-rank adaptation for class incremental learning. arXiv preprint arXiv:2501.13198  (2025)

\bibitem{yan2021dynamically}
Yan, S., Xie, J., He, X.: Der: Dynamically expandable representation for class incremental learning. In: Proceedings of the IEEE/CVF conference on computer vision and pattern recognition. pp. 3014--3023 (2021)

\bibitem{yu2024boosting}
Yu, J., Zhuge, Y., Zhang, L., Hu, P., Wang, D., Lu, H., He, Y.: Boosting continual learning of vision-language models via mixture-of-experts adapters. In: Proceedings of the IEEE/CVF Conference on Computer Vision and Pattern Recognition. pp. 23219--23230 (2024)

\bibitem{zhang2022benchmarking}
Zhang, Y., Yin, Z., Shao, J., Liu, Z.: Benchmarking omni-vision representation through the lens of visual realms. In: ECCV. pp. 594--611. Springer (2022)

\bibitem{zhou2024revisiting}
Zhou, D.W., Cai, Z.W., Ye, H.J., Zhan, D.C., Liu, Z.: Revisiting class-incremental learning with pre-trained models: Generalizability and adaptivity are all you need. International Journal of Computer Vision pp. 1--21 (2024)

\bibitem{zhou2024expandable}
Zhou, D.W., Sun, H.L., Ye, H.J., Zhan, D.C.: Expandable subspace ensemble for pre-trained model-based class-incremental learning. In: Proceedings of the IEEE/CVF Conference on Computer Vision and Pattern Recognition. pp. 23554--23564 (2024)

\bibitem{zhu2021prototype}
Zhu, F., Zhang, X.Y., Wang, C., Yin, F., Liu, C.L.: Prototype augmentation and self-supervision for incremental learning. In: Proceedings of the IEEE/CVF Conference on Computer Vision and Pattern Recognition. pp. 5871--5880 (2021)

\bibitem{zhuang2024g}
Zhuang, H., Chen, Y., Fang, D., He, R., Tong, K., Wei, H., Zeng, Z., Chen, C.: G-acil: Analytic learning for exemplar-free generalized class incremental learning. arXiv preprint arXiv:2403.15706  (2024)

\bibitem{zhuang2022acil}
Zhuang, H., Weng, Z., Wei, H., Xie, R., Toh, K.A., Lin, Z.: Acil: Analytic class-incremental learning with absolute memorization and privacy protection. Advances in Neural Information Processing Systems  \textbf{35},  11602--11614 (2022)

\bibitem{zou2025structural}
Zou, H., Zang, Y., Ji, X.: Structural features of the fly olfactory circuit mitigate the stability-plasticity dilemma in continual learning. arXiv preprint arXiv:2502.01427  (2025)

\bibitem{zou2025fly}
Zou, H., Zang, Y., Xu, W., Ji, X.: Fly-cl: A fly-inspired framework for enhancing efficient decorrelation and reduced training time in pre-trained model-based continual representation learning. arXiv preprint arXiv:2510.16877  (2025)

\bibitem{zou2025flylora}
Zou, H., Zang, Y., Xu, W., Zhu, Y., Ji, X.: Flylo{RA}: Boosting task decoupling and parameter efficiency via implicit rank-wise mixture-of-experts. In: The Thirty-ninth Annual Conference on Neural Information Processing Systems (2025)

\end{thebibliography}
